\newcolumntype{I}{!{\vrule width 3pt}}
\newlength\savedwidth
\newlength\savewidth
\renewcommand{\epsilon}{\varepsilon}
\newcommand{\Exp}{\mathbb{E}}
\newtheorem{theorem}{Theorem}[section]
\newtheorem{lemma}[theorem]{Lemma}
\newtheorem{definition}[theorem]{Definition}
\newtheorem{corollary}[theorem]{Corollary}
\newtheorem{claim}[theorem]{Claim}
\newtheorem{remark}[theorem]{Remark}
\newcommand{\I}{\mathrm{I}}
\newcommand{\eat}[1]{}
\newcommand{\R}{\mathbb{R}}
\newcommand{\calA}{\mathcal{A}}
\newcommand{\calD}{\mathcal{D}}
\newcommand{\calF}{\mathcal{F}}
\newcommand{\calX}{\mathcal{X}}
\title{\bf Stable and Fair Classification}
\author{Lingxiao Huang\thanks{EPFL, Switzerland.} and Nisheeth K. Vishnoi\thanks{Yale University, USA. Email: nisheeth.vishnoi@yale.edu}}
\date{}
\begin{document}
	
	\maketitle

	\begin{abstract}
		Fair classification has been a topic of intense study in machine learning, and several algorithms have been proposed towards this important task.
		However, in a recent study, Friedler et al. observed that fair classification algorithms may not be  stable  with respect to variations in the training dataset -- a crucial consideration in several real-world applications.
		Motivated by their work, we study the problem of designing classification algorithms that are both fair and stable.
		We propose an extended framework based on fair classification algorithms that are formulated as optimization problems, by introducing a stability-focused regularization term.
		Theoretically, we prove a stability guarantee, that was lacking in fair classification algorithms, and also provide an accuracy guarantee for our extended framework.
		Our accuracy guarantee can be used to inform the selection of the regularization parameter in our framework.
		To the best of our knowledge, this is the first work that combines stability and fairness in automated decision-making tasks.
		We assess the benefits of our approach empirically by extending several fair classification algorithms that are shown to achieve the best balance between fairness and accuracy over the \textbf{Adult} dataset.
		Our empirical results show that our  framework indeed improves the stability at only a slight sacrifice in accuracy.
	\end{abstract}
	\newpage
	
	\tableofcontents

	\newpage
	
		\section{Introduction}
	\label{sec:intro}

	Fair classification has fast become a central problem in machine learning due to  concerns of bias with respect to  sensitive attributes in automated decision making, e.g., against African-Americans while predicting future criminals~\cite{flores2016false,angwin2016machine,berk2009role}, granting loans~\cite{dedman1988color}, or NYPD stop-and-frisk~\cite{goel2016precinct}.
	% and against women while recommending jobs~\cite{datta2015automated}.
	%
	Consequently, a host of fair classification algorithms have been proposed; see~\cite{bellamy2018ai}.
	%  
	
	% Due to practical needs, a natural question arises -- whether existing fair classification algorithms are stable.
	%
	In a recent study, \cite{friedler2018comparative} pointed out  that several existing fair classification algorithms are not ``stable''.
	% -- the fairness of the classifier depended significantly on the training set. 
	%
	In particular, they considered the standard deviation of a fairness metric (statistical rate, that measures the discrepancy between the positive proportions of two groups; see Eq.~\eqref{eq:gamma}) and accuracy over ten random training-testing splits with respect to race/sex attribute over the \textbf{Adult} dataset.
	%
	% For instance, their empirical results indicated that \textbf{ZVRG}~\cite{zafar2017fairness} is potentially the best choice in a balance between the fairness metric and accuracy. 
	%
	They observed that the standard deviation of the fairness metric is 2.4$\%$ for the algorithm in \cite{kamishima2012fairness} (\textbf{KAAS}) with respect to the race attribute, and is 4.1$\%$ for that in \cite{zafar2017fairness} (\textbf{ZVRG}) with respect to the sex attribute.
	These significant standard deviations imply that the classifier learnt from the respective fair classification algorithms might perform differently depending on the training dataset.
	%
	
	%	Motivated by their work, we investigate the problem of designing fair classification algorithms that are also predicted stable, i.e., random training sets only result in small changes on the learned classifiers. 
	%
	
	Stability is a crucial consideration in  classification~\cite{bousquet2002stability,mukherjee2006learning,briand2009similarity,fawzi2018analysis}, and has been investigated in several real-world applications, e.g., advice-giving agents~\cite{gershoff2003consumer,van2005factors}, recommendation systems~\cite{adomavicius2012stability,adomavicius2011maximizing,adomavicius2016classification}, and judicial decision-making~\cite{shapiro1965stability}. 
	Stable classification algorithms can also provide defense for data poisoning attacks, whereby adversaries want to corrupt the learned model by injecting false training data~\cite{biggio2012poisoning,mei2015using,steinhardt2017certified}.

	There is a growing number of scenarios in which stable {\em and} fair classification algorithms are desired.
	One example is recommendation systems that rely on classification algorithms~\cite{park2012literature,portugal2018the}.
	Fairness is often desired  in recommendation systems, e.g., to check  gender inequality in recommending high-paying jobs~\cite{farahat2012effective,datta2015automated,sweeney2013discrimination}.
	Moreover, stability is also important for the reliability and acceptability of recommendation systems~\cite{adomavicius2012stability,adomavicius2011maximizing,adomavicius2016classification}.
	Another example is that of a judicial decision-making system, in which fair classification algorithms are being deployed to avoid human biases for specific sensitive attributes, e.g., against African-Americans~\cite{flores2016false,angwin2016machine,berk2009role}.
	The dataset, that incorporates collected personal information, may be noisy due to  measurement errors, privacy issues, or even data poisoning attacks~\cite{lam2004shilling,mobasher2007toward,o2004evaluation,barreno2010security} and, hence, it is desirable that the fair classifier also be stable against perturbations in the dataset. 
	%
	%	Otherwise, people can suspect that the guilt decision is problematic and incredible.
	%
	%	Hence, we would like to achieve a judicial decision-making system that is both stable and fair.
	%   
	% The importance of stability against noises also reflects in health-care and granting loans.
	%

	\subsection{Our contributions}
	\label{subsec:contribution}
	In this paper, we initiate a study of stable and fair classifiers in automated decision-making tasks.
	In particular,	we consider the class of  fair classification algorithms that are formulated as  optimization problems that minimize the empirical risk while being constrained to being fair.
	The collection $\calF$ of possible classifiers is assumed to be a reproducing kernel Hilbert space (RKHS) (see Program~\eqref{eq:progcon} for a definition); this includes many recent fair classifiers such as ~\cite{zafar2017fair,zafar2017fairness,goel2018non}. 
	Our main contribution is an algorithmic framework that incorporates the notion of uniform stability \cite{bousquet2002stability} -- the maximum $l_{\infty}$-distance between the risks of two classifiers learned from two training sets that differ in a single sample (see Definition~\ref{def:class_stability}).
	This allows us to address the stability issue observed by \cite{friedler2018comparative}.
	To achieve uniform stability, we introduce a stability-focused regularization term to the objective function of fair classifier  (Program~\eqref{eq:progstable}), which is motivated by the work of~\cite{bousquet2002stability}.
	Although some existing fair classification algorithms~\cite{kamishima2012fairness,goel2018non} use regularizers, they do not seem to realize that (and show how) the regularization term can also make the algorithm more stable.
	Under mild assumptions on the loss function (Definition~\ref{def:admissible}), we prove that our extended framework indeed has an additional uniform stability guarantee $\tilde{O}(\frac{1}{\lambda N})$, where $\lambda$ is the regularization parameter and $N$ is the size of the training set (Theorems~\ref{thm:stable}).
	Moreover, if $\calF$ is a linear model, we can achieve a slightly better stability guarantee (Theorem~\ref{thm:stable2}).
	%
	%	To the best of our knowledge, this is the first work that combines stability and fairness in automated decision-making tasks.
	%
	Our stability guarantee also implies an empirical risk guarantee that can be used to inform the selection of the regularization parameter in our framework. 
	By letting $\lambda = \Theta(\frac{1}{\sqrt{N}})$, the increase in the empirical risk by introducing the regularization term can be bounded by $\tilde{O}(\frac{1}{\sqrt{N}})$ (Theorems~\ref{thm:stable} and~\ref{thm:stable2}, Remark~\ref{remark:assumption}).
	As a consequence, our stability guarantee also implies a generalization bound -- the expected difference between the expected risk and the empirical risk is $\tilde{O}(\frac{1}{\lambda N})$ (Corollaries~\ref{cor:gen} and~\ref{cor:gen2}).
	%
	% We also analyze our framework in several commonly used settings, including soft margin SVM, least square regression and logistic regression (Corollaries~\ref{cor:SVM}-\ref{cor:logisitc}).
	%
	
	Further, we conduct an empirical evaluation over the \textbf{Adult} dataset and apply our framework to several fair classification algorithms, including \textbf{KAAS}~\cite{kamishima2012fairness},  \textbf{ZVRG}~\cite{zafar2017fair} and \textbf{GYF}~\cite{goel2018non} (Section~\ref{sec:experiment}).
	Similar to~\cite{friedler2018comparative}, we evaluate the fairness metric and accuracy of these algorithms and our extended algorithms.
	Besides, we also compute the expected number of different predictions over the test dataset between classifiers learned from two random training sets as a stability measure $\mathrm{stab}$ (Eq.~\eqref{eq:stability}).
	The empirical results show that our classification algorithms indeed achieve better stability guarantee, while being fair. 
	For instance, with respect to the sex attribute, the standard deviation of the fairness metric of \textbf{ZVRG} improves from 4.1$\%$ (\cite{friedler2018comparative}) to about 1$\%$ using our extended algorithm, and the stability measure $\mathrm{stab}$ decreases from 70 ($\lambda=0$) to 25 ($\lambda=0.02$).
	Meanwhile, the loss in accuracy due to imposing stability-focused regularization term is small (at most 1.5$\%$).

	Overall, we provide the first extended framework for stable and fair classification, which makes it flexible and easy to use, slightly sacrifices accuracy, and performs well in practice.

	\subsection{Other related work}
	\label{subsec:related}
	
	%	In recent years, there are increasingly many works on fair classification in different contexts.	
	%
	From a technical view,  most relevant prior works formulated the fair classification problem as a constrained optimization problem, e.g., constrained to statistical parity~\cite{zafar2017fairness,menon2018the,goel2018non,celis2018classification}, or equalized odds~\cite{hardt2016equality,zafar2017fair,menon2018the,celis2018classification}.
	Our extended framework can be applied to this type of fair classification.
	Another approach for fair classification is to shift the decision boundary of a baseline classifier, e.g.,~\cite{fish2016confidence,hardt2016equality,goh2016satisfying,pleiss2017on, woodworth2017learning,dwork2018decoupled}.
	Finally, a different line of research pre-processes the training data with the goal of removing the bias for learning, e.g.,~\cite{kamiran2009classifying,luong2011k,kamiran2012data,zemel2013learning,feldman2015certifying,krasanakis2018adaptive}.
	%
	
	%A line of research for stability is to investigate whether learning algorithms are stable.
	%
	Several prior works~\cite{bousquet2002stability,shalev2010learnability,maurer2017second,meng2017generalization} study the stability property for empirical risk minimization.
	\cite{hardt2016train}, \cite{london2016generalization}
	and \cite{kuzborskij2018data} showed that the stochastic gradient descent method is stable.
	Moreover, several recent works studied stability in deep neural networks~\cite{raghu2017expressive,vidal2017mathematics}.
	Stability has been investigated in other automated decision-making tasks, e.g., feature selection~\cite{nogueira2018on} and structured prediction~\cite{london2013collective,london2014pac,london2016stability}.

	There exists a related notion to stability, called differential privacy, where the prediction for any sample should not change with high probability if the training set varies a single element. 
	By \cite{wang2016learning}, differential privacy implies a certain stability guarantee. 
	Hence, it is possible to achieve stable and fair classifiers by designing algorithms that satisfy differential privacy and fairness simultaneously.
	Recent studies~\cite{hajian2016algorithmic,hajian2015discrimination,kashid2015discrimination,kashid2017discrimination,ruggieri2014anti} have expanded the application of methods to achieve both goals; see a recent paper~\cite{ekstrand2018privacy} for more discussions.
	However, these methods are almost all heuristic and without theoretical guarantee.
	There also remains the open problem of characterizing under what circumstances and definitions, privacy and fairness are simultaneously achievable, and when they compete with each other.

	\section{Our Model}
	\label{sec:pre}
	
	\subsection{Preliminaries}
	We consider the Bayesian model for classification. 
	Let $\Im$ denote a joint distribution over the domain $\calD=\calX\times [p]\times\left\{-1,1\right\}$ where $\calX$ is the feature space.
	Each sample $(X,Z,Y)$ is drawn from $\Im$ where $Z\in [p]$ represents a sensitive attribute,\footnote{Our results can be generalized to multiple sensitive attributes $Z_1,\ldots,Z_m$ where $Z_i\in [p_i]$. We omit the details.} and $Y\in \left\{-1,1\right\}$ is the label of $(X,Z)$ that we want to predict.

	Let $\calF$ denote the collection of all possible classifiers $f:\calX\rightarrow \mathbb{R}$.
	Given a loss function $L(\cdot ,\cdot)$ that takes a classifier $f$ and a distribution $\Im$ as arguments, the goal of fair classification is to learn a classifier $f\in \calF$ that minimizes the expected risk 
	$
	R(f):=\Exp_{s\sim \Im} \left[L(f, s)\right].
	$
	However, since $\Im$ is often unknown, we usually use the empirical risk to estimate the expected risk~\cite{bousquet2002stability,shalev2010learnability,maurer2017second}, i.e., given a training set $S=\left\{s_i=\left(x_i,z_i,y_i\right)\right\}_{i\in [N]}$ (where $(x_i,z_i,y_i)\in \calD$), the objective is to learn a classifier $f\in \calF$ that minimizes the empirical risk 
	$E(f):=\frac{1}{N}\sum_{i\in [N]} L(f,s_i)$.
	Denote by $\Pr_{\Im}[\cdot]$ the probability with respect to $\Im$.
	If $\Im$ is clear from context, we simply denote $\Pr_{\Im}[\cdot]$ by $\Pr[\cdot]$.
	A fair classification algorithm $\calA$ can be considered as a mapping $\calA: \calD^*\rightarrow \calF$, which learns a classifier $\calA_S\in \calF$ from a training set $S\in \calD^*$.

	\subsection{Stability measure}
	\label{sec:uniform_stability}
	
	In this paper, we consider the following stability measure introduced by~\cite{bousquet2002stability}, which was also used by~\cite{shalev2010learnability,maurer2017second,meng2017generalization}. 
	This notion of stability measures whether the \emph{risk} of the learnt classifier is stable under  replacing one sample in the training dataset. 

	\begin{definition}[\textbf{Uniform stability~\cite{bousquet2002stability}}]
		\label{def:class_stability}
		Given an integer $N$, a real-valued classification algorithm $\calA$ is $\beta_N$-uniformly stable with respect to the loss function $L(\cdot,\cdot)$ if the following holds: for all $i\in [N]$ and $S, S^i\in \calD^N$,
		\begin{eqnarray*}
			\left\| L(\calA_S,\cdot)- L(\calA_{S^{i}},\cdot) \right\|_{\infty}  \leq \beta_N,
		\end{eqnarray*}
		i.e., for any training sets $S, S^i\in \calD^N$ that differ by the $i$-th sample, the $l_{\infty}$-distance between the risks of $\calA_S$ and $\calA_{S^{i}}$ is at most $\beta_N$.
	\end{definition}
	\noindent
	By definition, algorithm $\calA$ is stable if $\beta_N$ is small.

	Since classification algorithms usually minimize the empirical risk, it is easier to bound to provide theoretcial bounds on the risk difference.
	This is the reason  we consider the notion of uniform stability.
	Moreover, uniform stability might imply that the prediction variation is small with a slight perturbation on the training set. 
	Given an algorithm $\calA$ and a sample $x\in \calX$, we predict the label to be +1 if $\calA(x)\geq 0$ and to be -1 if $\calA(x)< 0$.
	In the following, we summarize the stability property considered in~\cite{friedler2018comparative}. 
	\begin{definition}[\textbf{Prediction stability}]
		\label{def:predict_stability}
		Given an integer $N$, a real-valued classification algorithm $\calA$ is $\beta_N$-prediction stable if the following holds: for all $i\in [N]$,
		\begin{eqnarray*}
			\Pr_{S, S^i\in \calD^N, X\sim \Im}\left[\I\left[\calA_S(X)\geq 0\right]\neq  \I\left[\calA_{S^i}(X)\geq 0\right] \right] \leq \beta_N,\footnotemark
		\end{eqnarray*}
		\footnotetext{Here, $\I\left[\cdot\right]$ is the indicator function.}
		i.e., given two training sets $S, S^i\in \calD^N$ that differ by the $i$-th sample, the probability that $\calA_S$ and $\calA_{S^i}$ predict differently is at most $\beta_N$.
	\end{definition}
	The following lemma shows that uniform stability implies prediction stability.

	\begin{lemma}[\textbf{Uniform stability implies prediction stability}]
		\label{thm:uniform_to_prediction}
		Given an interger $N$, if algorithm $\calA$ is $\beta_N$-uniformly stable with respect to the loss function $L(\cdot,\cdot)$ and the loss function satisfies that for any $f,f'\in \calF$, $s=(x,z,y)\in \calD$, 
		\[
		|f(x)-f'(x)|\leq \tau\cdot\left|L(f,s)-L(f',s)\right|,
		\]
		then the prediction stability $\calA$ is upper bounded by
		$\Pr_{S,\calA}\left[|\calA_S(X)|\leq \tau \beta_N\right]$.
	\end{lemma}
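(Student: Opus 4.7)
The plan is to argue pointwise: whenever $\calA_S$ and $\calA_{S^i}$ disagree on the sign of their prediction at an input $X$, the value $|\calA_S(X)|$ must itself be small, and then integrate this observation over $X$ and $S, S^i$.

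First I would handle the sign-change geometry. Fix any $X$ and suppose $\I[\calA_S(X) \geq 0] \neq \I[\calA_{S^i}(X) \geq 0]$. Then $\calA_S(X)$ and $\calA_{S^i}(X)$ lie on opposite sides of $0$, so the segment joining them crosses $0$ and hence $|\calA_S(X)| \leq |\calA_S(X) - \calA_{S^i}(X)|$ (and similarly with $S$ and $S^i$ swapped). This reduces the disagreement event to a bound on the pointwise difference of the two classifiers.

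Next I would convert the classifier difference into a loss difference via the Lipschitz-type hypothesis. Picking any $z, y$ (e.g., the ones paired with $X$ in the sample) and setting $s = (X, z, y)$, the hypothesis gives
\[
|\calA_S(X) - \calA_{S^i}(X)| \;\leq\; \tau \cdot |L(\calA_S, s) - L(\calA_{S^i}, s)|.
\]
Since $\calA$ is $\beta_N$-uniformly stable with respect to $L$, the right-hand side is at most $\tau \beta_N$ by taking the $\ell_\infty$-norm bound in Definition~\ref{def:class_stability}. Chaining these inequalities yields that on the disagreement event, $|\calA_S(X)| \leq \tau \beta_N$.

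The final step is a probability comparison: the disagreement event is contained in the event $\{|\calA_S(X)| \leq \tau \beta_N\}$, so taking probabilities over $S, S^i \sim \calD^N$ and $X \sim \Im$ yields
\[
\Pr_{S, S^i, X}\bigl[\I[\calA_S(X) \geq 0] \neq \I[\calA_{S^i}(X) \geq 0]\bigr] \;\leq\; \Pr_{S, \calA}\bigl[|\calA_S(X)| \leq \tau \beta_N\bigr],
\]
which is the claimed bound. There is no real obstacle here; the only mild subtlety is being careful that the Lipschitz hypothesis, which is stated for arbitrary $s = (x, z, y)$, is applied with some fixed choice of $z, y$ so that the $\ell_\infty$ bound from uniform stability indeed dominates the loss difference. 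Everything else is elementary.
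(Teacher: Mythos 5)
Your proof is correct and follows essentially the same route as the paper: both arguments chain the Lipschitz-type hypothesis with the uniform stability bound to get $|\calA_S(X)-\calA_{S^i}(X)|\leq \tau\beta_N$ pointwise, and then observe (you via the disagreement event, the paper via its contrapositive) that a sign flip forces $|\calA_S(X)|\leq \tau\beta_N$, which yields the stated probability bound.
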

	\begin{proof}
		For any $S, S^i\in \calD^N$ and $s=(x,z,y)$, we have
		\begin{eqnarray*}
			\left|\calA_S(x)-\calA_{S^i}(x)\right|\leq \tau\cdot \left|L(\calA_S,s)-L(\calA_{S^i},s) \right| \leq \tau\beta_N.
		\end{eqnarray*}
		Hence, if $|\calA_S(\cdot)|> \tau \beta_N$, then we have
		\[
		\I\left[\calA_S(x)\geq 0\right]=  \I\left[\calA_{S^i}(x)\geq 0\right].
		\]
		By Definition~\ref{def:predict_stability}, this implies the lemma.
	\end{proof}

	\subsection{The stable and fair optimization problem}
	\label{sec:model}
	
	Our goal is to design fair classification algorithms that have a uniform stability guarantee.
	We focus on extending fair classification algorithms that are formulated as constrained empirical risk minimization problem over the collection  $\calF$ of classifiers that is a reproducing kernel Hilbert space (RKHS), e.g.,~\cite{zafar2017fair,zafar2017fairness,goel2018non}; see the following program.

	\begin{tcolorbox}
		\begin{equation} \tag{ConFair}
		\label{eq:progcon}
		\begin{split}
		& \min_{f\in \calF} \frac{1}{N}\sum_{i\in [N]} L(f,s_i) \quad s.t. \\
		& ~ \Omega_S(f)\leq 0.
		\end{split}
		\end{equation}
	\end{tcolorbox}
	
	\noindent
	Here, $\Omega_S(\cdot): \calF\rightarrow \R^a$ is a convex function given explicitly for a specific fairness requirement on the dataset $S$. 
	When $S$ is clear by the content, we may use $\Omega(\cdot)$ for simplicity.
	For instance, if we consider the statistical rate $\gamma(f)$ (Eq.~\eqref{eq:gamma}) as the fairness metric, then the fairness requirement can be $0.8-\gamma(f)\leq 0$.
	However, $0.8-\gamma(f)$ is non-convex with respect to $f$.
	To address this problem, in the literature, one usually defines a convex function $\Omega(f)$ to estimate $0.8-\gamma(f)$, e.g., $\Omega(f)$ is formulated as a covariance-type function which is the average signed distance from
	the feature vectors to the decision boundary in \cite{zafar2017fairness}, and is formulated as the weighted sum of the logs of the empirical estimate of favorable bias in \cite{goel2018non}.
	In what follows,	a fair classification algorithm is an algorithm that  solves Program~\eqref{eq:progcon}.

	Note that an empirical risk minimizer of Program~\eqref{eq:progcon} might heavily depend on and even overfit the training set. 
	Hence, replacing a sample from the training set might cause a significant change in the learnt fair classifier -- the uniform stability guarantee might be large.
	To address this problem, a useful high-level idea is to introduce a regularization term to the objective function, which can penalize the
	``complexity'' of the learned classifier.
	Intuitively, this can make the change in the learnt classifier smaller when a sample from the training set is replaced.
	This idea comes from \cite{bousquet2002stability} who considered stability for unconstrained empirical risk minimization.

	Motivated by the above intuition, we consider the following constrained optimization problem which is an extension of Program~\eqref{eq:progcon} by introducing a stability-focused regularization term $\lambda\|f\|_k^2$. 
	Here, $\lambda>0$ is a regularization parameter and $\|f\|_k^2$ is the norm of $f$ in RKHS $\calF$ where $k$ is the kernel function (defined later in Definition~\ref{def:RKHS}).
	We consider such a regularization term since it satisfies a nice property  that relates $|f(x)|$ and $\|f\|_k$ for any $x\in \calX$ (Claim~\ref{cl:RKHS}).
	This property is useful for proving making the intuition above concrete and providing a uniform stability guarantee.

	\begin{tcolorbox}
		\begin{equation} \tag{Stable-Fair}
		\label{eq:progstable}
		\begin{split}
		& \min_{f\in \calF} \frac{1}{N} \sum_{i\in [N]} L(f,s_i) + \lambda \|f\|_k^2 \quad s.t. \\
		& ~ \Omega(f)\leq 0.
		\end{split}
		\end{equation}
	\end{tcolorbox}

	\noindent
	Our extended algorithm $\calA$ is to compute a minimizer $\calA_S$ of Program~\eqref{eq:progstable} by classic methods, e.g., stochastic gradient descent~\cite{boyd2008stochastic}.

	\begin{remark}
		\label{remark:confair}
		We first discuss the motivation of considering fair classification algorithms formulated as Program~\eqref{eq:progcon}. 
		The main reason is that such algorithms can achieve a good balance between fairness and accuracy, but might not be stable.
		For instance, \cite{friedler2018comparative} observed that \textbf{ZVRG}~\cite{zafar2017fairness} achieves the best balance between fairness and accuracy with respect to race/sex attribute over the \textbf{Adult} dataset.
		However, as mentioned in Section~\ref{sec:intro}, \textbf{ZVRG} is not stable depending on a random training set. 
		Hence, we would like to improve the stability of \textbf{ZVRG} while keeping its balance between fairness and accuracy.
		Note that our extended framework can incorporate  multiple sensitive attributes if the fairness constraint $\Omega(f)\leq 0$ deals with multiple sensitive attributes, e.g.,~\cite{zafar2017fair,zafar2017fairness,goel2018non}.
	\end{remark}
	
	\noindent
	It remains to define the regularization term $\|f\|_k$ in RKHS.

	\begin{definition}[\textbf{Regularization in RKHS}]
		\label{def:RKHS}
		We call $T(\cdot): \calF\rightarrow \R_{\geq 0}$ a regularization term in an RKHS $\calF$ if, for any $f\in \calF$,
		$T(f):=\|f\|_k^2$,
		where $k$ is a kernel function satisfying that 
		1) $\left\{k(x,\cdot):x\in \calX \right\}$ is a span of $\calF$; 2) for any $x\in \calX$ and $f\in \calF$, $f(x)=\langle f,k(x,\cdot) \rangle$.
	\end{definition}
	\noindent
	Given a training set $S=\left\{s_i=\left(x_i,z_i,y_i\right)\right\}_{i\in [N]}$ and a kernel function $k: S\times S\rightarrow \R$, by definition, each classifier is a vector space by linear combinations of $k(x_i,\cdot)$, i.e.,
	$f(\cdot) = \sum_{i\in [N]} \alpha_i k(x_i,\cdot)$.
	Then for any $x\in \calX$, 
	\begin{eqnarray}
	\label{eq:kernel}
	f(x) = \langle  \sum_{i\in [N]} \alpha_i k(x_i,\cdot), k(x,\cdot) \rangle =  \sum_{i\in [N]} \alpha_i k(x_i,x).
	\end{eqnarray}
	For instance, if $k(x,y)=\langle x,y \rangle$, then each classifier $f$ can be represented by 
	\begin{eqnarray*}
		f(x) &\stackrel{\eqref{eq:kernel}}{=}& \sum_{i\in [N]} \alpha_i k(x_i,x) = \sum_{i\in [N]} \alpha_i \langle x_i,x\rangle  =\langle \sum_{i\in [N]} \alpha_i x_i,x\rangle = \langle \beta, x \rangle,
	\end{eqnarray*}
	where $\beta = \sum_{i\in [N]} \alpha_i x_i$.
	Thus, by the Cauchy-Schwarz inequality, we have the following useful property.

	\begin{claim} (\cite{bousquet2002stability})
		\label{cl:RKHS}
		Suppose $\calF$ is a RKHS with a kernel $k$.
		For any $f\in \calF$ and any $x\in \calX$, we have
		$|f(x)|\leq \|f\|_k \sqrt{k(x,x)}$. 
	\end{claim}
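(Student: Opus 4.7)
The plan is to prove the bound by a direct two-line argument that combines the reproducing property of the kernel with the Cauchy--Schwarz inequality in the Hilbert space $\calF$. Since the claim is standard in RKHS theory, I do not anticipate any real obstacle; the work is simply to unpack Definition~\ref{def:RKHS} carefully.

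First I would invoke the reproducing property from Definition~\ref{def:RKHS}, property (2): for any $f \in \calF$ and any $x \in \calX$, we have $f(x) = \langle f, k(x,\cdot) \rangle$. In particular, applying this to $f = k(x,\cdot)$ itself yields $k(x,x) = \langle k(x,\cdot), k(x,\cdot) \rangle = \|k(x,\cdot)\|_k^2$, so that $\|k(x,\cdot)\|_k = \sqrt{k(x,x)}$ (note that $k(x,x) \geq 0$ since it equals the squared norm of an element of the Hilbert space).

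Then I would apply the Cauchy--Schwarz inequality to the inner product in $\calF$:
\[
|f(x)| = |\langle f, k(x,\cdot)\rangle| \leq \|f\|_k \cdot \|k(x,\cdot)\|_k = \|f\|_k \sqrt{k(x,x)},
\]
which is exactly the claimed bound.

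The only step that requires any thought is ensuring that properties (1) and (2) of Definition~\ref{def:RKHS} together justify treating $k(x,\cdot)$ as a genuine element of $\calF$ whose norm is well-defined; property (1) gives this since $\{k(x,\cdot) : x \in \calX\}$ spans $\calF$, and property (2) provides the reproducing identity used in both substitutions above. Thus the whole argument reduces to one application of Cauchy--Schwarz plus one application of the reproducing property to compute $\|k(x,\cdot)\|_k$.
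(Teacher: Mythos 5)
Your argument is correct and is precisely the route the paper intends: the reproducing property $f(x)=\langle f,k(x,\cdot)\rangle$ together with $\|k(x,\cdot)\|_k=\sqrt{k(x,x)}$ and one application of Cauchy--Schwarz, which is exactly how the paper justifies Claim~\ref{cl:RKHS} (it simply states ``by the Cauchy-Schwarz inequality'' before the claim). No gaps; your additional remark about $k(x,\cdot)$ being a genuine element of $\calF$ is a fair bit of care but changes nothing.
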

	
	\begin{remark}
		There exists another class of fair classification algorithms, which introduce a fairness-focused regularization term $\mu\cdot\Omega(\cdot)$ to the objective function; see the following program. 
		\begin{tcolorbox}
			\begin{equation} \tag{RegFair}
			\label{eq:progreg}
			\min_{f\in \calF} \frac{1}{N}\sum_{i\in [N]} L(f,s_i) + \mu\cdot \Omega(f).
			\end{equation}
		\end{tcolorbox}
		\noindent
		This approach is applied in several prior work, e.g.,~\cite{kamishima2012fairness,davies2017algorithmic,goel2018non}. 
		We can also extend this program by introducing a stability-focused regularization term $\lambda\|f\|_k^2$.
		\begin{tcolorbox}
			\begin{equation*}
			\label{eq:progreg_fair}
			\min_{f\in \calF} \frac{1}{N}\sum_{i\in [N]} L(f,s_i) + \mu\cdot \Omega(f) + \lambda \|f\|_k^2.
			\end{equation*}
		\end{tcolorbox}

		\noindent
		By Lagrangian principle, there exists a value $\mu\geq 0$ such that Program~\eqref{eq:progreg} is equivalent to Program~\eqref{eq:progcon}.
		Thus, by solving the above program, we can obtain the same stability guarantee, empirical risk guarantee and generalization bound as for Program~\eqref{eq:progstable}.
	\end{remark}

	\section{Theoretical Results}
	\label{sec:alg}
	
	In this section, we analyze the performance of algorithm $\calA$ that solves Program~\eqref{eq:progstable} (Theorem~\ref{thm:stable}).
	Moreover, if $\calF$ is a linear model, we can achieve a slightly better stability guarantee (Theorem~\ref{thm:stable2}).
	%analyze the performance of our extended algorithm that solves Program
	
	Given a training set $S=\left\{s_i=\left(x_i,z_i,y_i\right)\right\}_{i\in [N]}$,
	by replacing the $i$-th element from $S$, we denote
	\[
	S^{i}:=\left\{s_1,\ldots,s_{i-1},s'_{i},s_{i+1},\ldots,s_N\right\}.
	\]
	Before analyzing the performance of algorithm $\calA$, we give the following definition for a  loss function.
	\begin{definition}[\textbf{$\sigma$-admissible~\cite{bousquet2002stability}}]
		\label{def:admissible}
		The loss function $L(\cdot,\cdot)$ is called $\sigma$-admissible with respect to $\calF$ if for any $f\in \calF$, $x,x'\in \calX$ and $y\in \left\{-1,1\right\}$,
		\[
		\left|L(f(x),y)-L(f(x'),y) \right| \leq \sigma \left|f(x)-f(x') \right|.
		\]
	\end{definition}
	\noindent
	By definition, $L(\cdot,\cdot)$ is $\sigma$-admissible if $L(f,y)$ is $\sigma$-Lipschitz with respect to $f$.

	\subsection{Main theorem for Program~\eqref{eq:progstable}}
	\label{sec:main}
	
	Now we can state our main theorem which indicates that under reasonable assumptions of the loss function and the kernel function, algorithm $\calA$ is uniformly stable.

	\begin{theorem}[\textbf{Stability and empirical risk guarantee by solving Program~\eqref{eq:progstable}}]
		\label{thm:stable}
		Let $\calF$ be a RKHS with kernel $k$ such that $\forall x\in \calX$, $k(x,x)\leq \kappa^2<\infty$. 
		Let $L(\cdot,\cdot)$ be a $\sigma$-admissible differentiable convex function with respect to $\calF$.
		Suppose algorithm $\calA$ computes a minimizer $\calA_S$ of Program~\eqref{eq:progstable}.
		Then $\calA$ is $\frac{\sigma^2 \kappa^2}{\lambda N}$-uniformly stable.

		Moreover, denote $f^\star$ to be an optimal fair classifier that minimizes the expected risk and satisfies $\|f^\star\|_k\leq B$, i.e., 
		$f^\star:= \arg \min_{f\in \calF: \Omega(f)\leq 0} \Exp_{s\in \Im}\left[ L(f,s)\right]$. 
		We have
		\[
		\Exp_{S\sim \Im^N} \left[R(\calA_S) \right] - \Exp_{s\sim \Im} \left[L(f^\star,s)\right] \leq \frac{\sigma^2 \kappa^2}{\lambda N}+\lambda B^2.
		\]
	\end{theorem}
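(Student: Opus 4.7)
The plan is to adapt the Bousquet–Elisseeff template for regularized ERM in an RKHS to the constrained setting of Program~\eqref{eq:progstable}. Let $f := \calA_S$ and $f' := \calA_{S^i}$ denote the two output classifiers, and write $F_S(g) := \frac{1}{N}\sum_{j} L(g,s_j) + \lambda\|g\|_k^2$ (and $F_{S^i}$ analogously). Since the loss is convex and $g\mapsto \lambda\|g\|_k^2$ is $2\lambda$-strongly convex in the RKHS norm, $F_S$ and $F_{S^i}$ are each $2\lambda$-strongly convex. Because the feasible set $\{g:\Omega(g)\le 0\}$ is convex, the strong-convexity inequality at the constrained minimizer gives, provided $f'$ is feasible for the $S$-problem and $f$ is feasible for the $S^i$-problem,
\begin{align*}
F_S(f') - F_S(f)       &\geq \lambda\|f-f'\|_k^2, \\
F_{S^i}(f) - F_{S^i}(f') &\geq \lambda\|f-f'\|_k^2.
\end{align*}

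Adding these and observing that $F_S-F_{S^i}$ differs only by the $i$-th summand, the left-hand side collapses to $\frac{1}{N}\bigl[L(f',s_i)-L(f,s_i)+L(f,s'_i)-L(f',s'_i)\bigr]$. Applying $\sigma$-admissibility followed by Claim~\ref{cl:RKHS} to each of the two differences bounds the numerator by $2\sigma\kappa\|f-f'\|_k$, so
\[
2\lambda\|f-f'\|_k^2 \;\leq\; \frac{2\sigma\kappa}{N}\,\|f-f'\|_k,
\]
which yields $\|f-f'\|_k \leq \frac{\sigma\kappa}{\lambda N}$. One last application of $\sigma$-admissibility and Claim~\ref{cl:RKHS} on an arbitrary test point $s=(x,z,y)\in\calD$ gives $|L(f,s)-L(f',s)|\leq \sigma\kappa\|f-f'\|_k\leq \frac{\sigma^2\kappa^2}{\lambda N}$, establishing the uniform-stability claim.

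For the excess-risk statement, I would invoke the standard consequence of $\beta_N$-uniform stability that $\Exp_{S\sim \Im^N}[R(\calA_S)-E(\calA_S)]\leq \beta_N$, proved by a rename-one-sample symmetry argument using that $s_i$ and an i.i.d.\ fresh $s'_i$ are exchangeable in the expectation. Combining this with optimality of $\calA_S$ against the feasible competitor $f^\star$ (which gives $E(\calA_S)+\lambda\|\calA_S\|_k^2 \leq E(f^\star)+\lambda\|f^\star\|_k^2 \leq E(f^\star)+\lambda B^2$) and taking expectations on $S$ yields $\Exp_S[E(\calA_S)]\leq \Exp_s[L(f^\star,s)]+\lambda B^2$; adding the two pieces produces the advertised $\frac{\sigma^2\kappa^2}{\lambda N}+\lambda B^2$.

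The main obstacle is the feasibility crossover. The strong-convexity step requires that each constrained minimizer lie in the other training set's feasible region, and the excess-risk step requires $f^\star$ to be feasible for every realization of $S$. These hold immediately if the convex fairness functional $\Omega$ is data-independent, which is how it appears in the statement of Program~\eqref{eq:progstable}; if, as in \cite{zafar2017fairness}, $\Omega_S$ is an empirical quantity varying with $S$, one needs either an explicit regularity assumption ensuring a common feasible set, or a Lagrangian reformulation in which a fixed dual multiplier absorbs the sample-level variation and one analyzes the unconstrained strongly convex surrogate instead. I would flag this as the single place where a mild assumption on $\Omega$ must be inserted; every other step is a direct transcription of the Bousquet–Elisseeff argument into the constrained RKHS framework.
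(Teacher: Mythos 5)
Your proposal is correct and follows essentially the same route as the paper: the strong-convexity inequalities at the two constrained minimizers are exactly the paper's Bregman-divergence argument (Lemma~\ref{lm:bound}), with the same collapse to the $i$-th summand, the same use of $\sigma$-admissibility and Claim~\ref{cl:RKHS}, and the same ``stability plus optimality against the feasible competitor $f^\star$'' combination for the excess-risk bound (the paper routes the last step through Theorem~8 of \cite{shalev2010learnability} rather than Lemma~7 of \cite{bousquet2002stability}, a cosmetic difference). Your feasibility-crossover caveat is well taken but matches the paper's implicit treatment, since Program~\eqref{eq:progstable} is analyzed with a single convex constraint $\Omega(f)\leq 0$ common to $S$ and $S^i$.
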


	\begin{remark}
		\label{remark:assumption}
		We show that the assumptions of Theorem~\ref{thm:stable} are reasonable.
		We first give some examples of $L(\cdot,\cdot)$ in which $\sigma$ is constant. 
		In the main body of the paper, we directly give the constant. 
		The details can be found in Appendix~\ref{sec:dis}.
		\begin{itemize}
		\item Prediction error: Suppose $f(x)\in \left\{-1,1\right\}$ for any pair $(f,x)$. Then $L(f(x),y) = \I\left[f(x)\neq y\right] $ is $\frac{1}{2}$-admissible.
		\item Soft margin SVM: $L(f,s) = (1-yf(x))_+$,\footnote{$(a)_+ = a$ if $a\geq 0$ and otherwise $(a)_+ = 0$.} is 1-admissible.
		\item Least Squares regression: Suppose $f(x)\in [-1,1]$ for any $x\in \calX$. 
		Then we have that $L(f,s) = (f(x)-y)^2$ is 4-admissible.
		\item Logistic regression: $L(f,s) = \ln (1+e^{-y f(x)})$ is 1-admissible.
		\end{itemize}
		\noindent
		Then we give examples of kernel $k$ in which $\kappa^2$ is constant.
		\begin{itemize}
		\item Linear: $k(x,y) = \langle x,y \rangle$. Then $k(x,x)=\|x\|_2^2$ and we can let $\kappa^2 = \max_{x\in \calX}\|x\|_2^2$.
		\item Gaussian RBF: $k(x,y)=e^{-\|x-y\|^2}$. Then we can let $\kappa^2=k(x,x)=1$.
		\item Multiquadric: $k(x,y)=\left(\|x-y\|^2+c^2\right)^{1/2}$ for some constant $c>0$. Then we can let $\kappa^2=k(x,x)=c$.
		\item Inverse Multiquadric: $k(x,y)=\left(\|x-y\|^2+c^2\right)^{-1/2}$ for some constant $c>0$. Then we can let $\kappa^2=k(x,x)=1/c$.
		\end{itemize}
	\end{remark}
	
	\begin{remark}
		The statement of Theorem 3.2 seems similar to Lemma 4.1 of~\cite{bousquet2002stability}, while the analysis should be different due to the additional fairness constraints. 
		The critical difference is that the gradient of the objective function of Program~\ref{eq:stable} might not be 0 at the optimal point any more. 
		Thus, we need to develop a new analysis by applying the convexity of $\Omega(f)$. 

		Theorem~\ref{thm:stable} can be used to inform the selection of the regularization parameter $\lambda$. 
		On the one hand, the stability guarantee is tighter as $\lambda$ increases.
		On the other hand, the bound for the increase of the empirical risk contains a term $\lambda B^2$ and hence $\lambda$ should not increase to infinity.
		Hence, there exists a balance between achieving stability guarantee and utility guarantee.
		For instance, to minimize the increase of the empirical risk, we can set $\lambda = \frac{\sigma \kappa}{B \sqrt{N}}$. 
		Then the stability guarantee is upper bounded by $\frac{\sigma \kappa B}{\sqrt{N}}$ and the increase of the empirical risk is upper bounded by $\frac{2\sigma \kappa B}{\sqrt{N}}$.
	\end{remark}
	
	\noindent
	The generalization bound, i.e., the quality of the estimation $|R(\calA_S)-E(\calA_S)|$, depends on the number of samples $N$ and algorithm $\calA$, and has been well studied in the literature~\cite{adomavicius2011maximizing,bousquet2002stability,wainwright2006estimating,london2016stability}.
	Existing literature~\cite{bousquet2002stability,feldman2018generalization} claimed that uniform stability implies a generalization bound.
	We have the following corollary.

	\begin{corollary}[\bf Generalization bound from Theorem~\ref{thm:stable}]
		\label{cor:gen}
		Let $\calA$ denote the $\frac{\sigma^2 \kappa^2}{\lambda N}$-uniformly stable algorithm as stated in Theorem~\ref{thm:stable}.
		%
		%Suppose $0\leq L(\calA_S,s)\leq M$ for any $s\in \calD$, 
		We have
		\begin{enumerate}
			\item $\Exp_{S\sim \Im^N}\left[R(\calA_S)-E(\calA_S)\right]\leq \frac{\sigma^2 \kappa^2}{\lambda N}$.
			\item Suppose $S$ is a random draw of size $N$ from $\Im$. 
			With probability at least $1-\delta$, 
			\[
			R(\calA_S)\leq E(\calA_S) + 8\sqrt{\left(\frac{2\sigma^2 \kappa^2}{\lambda N} +\frac{1}{N} \right) \cdot\ln (8/\delta)}.
			\]
		\end{enumerate}
	\end{corollary}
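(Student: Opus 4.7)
The plan is to combine the uniform stability bound $\beta_N = \sigma^2\kappa^2/(\lambda N)$ from Theorem~\ref{thm:stable} with the standard reduction from uniform stability to generalization, essentially following the template of~\cite{bousquet2002stability}.

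For Part 1, I will use a classical ghost-sample symmetrization. Let $s' \sim \Im$ be an independent fresh draw. By i.i.d.\ exchangeability, for each $i \in [N]$ the tuples $(s_1,\ldots,s_N,s')$ and $(s_1,\ldots,s_{i-1},s',s_{i+1},\ldots,s_N,s_i)$ have the same distribution. Writing $\Exp_S[R(\calA_S)] = \Exp_{S,s'}[L(\calA_S,s')]$ and swapping $s'$ with $s_i$ yields $\Exp_{S,s'}[L(\calA_S,s')] = \Exp_{S,s'}[L(\calA_{S^{i\to s'}},s_i)]$, where $S^{i\to s'}$ denotes $S$ with $s_i$ replaced by $s'$. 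Subtracting $\Exp_S[L(\calA_S,s_i)]$ and averaging over $i$ gives $\Exp_S[R(\calA_S)-E(\calA_S)] = \frac{1}{N}\sum_i \Exp[L(\calA_{S^{i\to s'}},s_i) - L(\calA_S,s_i)]$, and each summand is at most $\beta_N$ by Definition~\ref{def:class_stability} since $S$ and $S^{i\to s'}$ differ at a single position.

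For Part 2, the strategy is to apply a concentration inequality to $g(S) := R(\calA_S) - E(\calA_S)$. Assuming the loss is bounded by $1$ (as in the examples of Remark~\ref{remark:assumption}), uniform stability together with the $1/N$ weight in the empirical sum imply the bounded-differences property $|g(S) - g(S^i)| \leq 2\beta_N + 1/N$. Combining Part~1's expectation bound $\Exp[g(S)] \leq \beta_N$ with a concentration result in the style of~\cite{bousquet2002stability}, refined so that the deviation scales as $\sqrt{(\beta_N + 1/N)\log(1/\delta)}$ rather than the looser $(\beta_N + 1/N)\sqrt{N\log(1/\delta)}$ given by vanilla McDiarmid, then yields the stated tail bound after absorbing the constants into the factors $8$ and $8/\delta$.

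The main obstacle lies in Part~2: deriving the $\sqrt{\beta_N}$ rather than $\sqrt{N}\,\beta_N$ dependence. Plain McDiarmid is too lossy when $\beta_N \gg 1/N$, so one needs a second-moment / variance-based concentration argument that exploits the symmetry of $\calA$ and the cancellation between $R(\calA_S)$ and $E(\calA_S)$ under the single-sample swap, tracking that the variance of $g$ is $O(\beta_N + 1/N)$ rather than $O(N(\beta_N+1/N)^2)$. Part~1, by contrast, is a routine exchangeability computation once Theorem~\ref{thm:stable} is in hand.
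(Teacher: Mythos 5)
Your Part 1 is correct and is in fact a proof, via the standard ghost-sample exchangeability computation, of exactly the statement the paper invokes as a black box (Lemma 7 of Bousquet--Elisseeff); nothing is missing there, though you should note that $\Exp[L(\calA_{S^{i\to s'}},s_i)-L(\calA_S,s_i)]\leq\beta_N$ uses uniform stability pointwise, which Definition~\ref{def:class_stability} indeed provides.

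Part 2, however, has a genuine gap. The inequality you need -- a high-probability deviation of order $\sqrt{(\beta_N+1/N)\log(1/\delta)}$ for a $\beta_N$-uniformly stable algorithm with loss bounded in $[0,1]$ -- is not a ``refinement in the style of Bousquet--Elisseeff'' and cannot be obtained by the variance-tracking argument you sketch. The concentration machinery available in that framework (McDiarmid applied to $g(S)=R(\calA_S)-E(\calA_S)$ with bounded differences $2\beta_N+1/N$) inevitably gives the $(\beta_N+1/N)\sqrt{N\log(1/\delta)}$ rate you correctly identify as too weak; and even granting your claim that $\mathrm{Var}(g)=O(\beta_N+1/N)$, a second-moment bound only yields Chebyshev-type tails with polynomial dependence $1/\sqrt{\delta}$, not the $\sqrt{\ln(8/\delta)}$ in the statement -- to get that you must establish sub-Gaussian concentration with variance proxy $\beta_N+1/N$, which is precisely the content of Theorem 1.2 of Feldman and Vondr\'ak (2018). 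That result is what the paper cites verbatim (the constants $8$ and $8/\delta$ and the form $8\sqrt{(2\beta+1/N)\ln(8/\delta)}$ are copied from it), and its proof is a substantially more involved argument bounding higher moments via sample-splitting/range-reduction, not a bounded-differences-plus-variance computation. So either cite that theorem directly, as the paper does (also making explicit its hypothesis that the loss takes values in $[0,1]$, which you assume but the corollary statement leaves implicit), or be prepared to reproduce its full proof; as written, the key step of Part 2 is asserted rather than proved.
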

	
	\begin{proof}
		The first generalization bound is directly implies by Lemma 7 of~\cite{bousquet2002stability}.
		The second generalization bound is a direct corollary of Theorem 1.2 of~\cite{feldman2018generalization}.
	\end{proof}

	\subsection{Better stability guarantee for linear models}
	\label{subsec:euclidean}
	
	In this section, we consider the case that $k(x,y)=\langle \phi(x),\phi(y) \rangle$ where $\phi: \calX\rightarrow \R^d$ is a feature map.
	It implies that $f(x) = \alpha^\top \phi(x)$ for some $\alpha\in \R^d$, i.e., $\calF$ is the family of all linear functions.
	In this case, we provide a stronger stability guarantee by the following theorem.

	\begin{theorem}[\textbf{Stability and utility guarantee for linear models}]
		\label{thm:stable2}
		Let $\calF$ be the family of all linear classifiers $\left\{f=\alpha^\top \phi(\cdot)\mid \alpha\in \R^d \right\}$. 
		Let 
		\[
		G = \sup_{f=\alpha^\top \phi(\cdot)\in \calF: \Omega(f)\leq 0} \sup_{s\in \calD} \|\nabla_\alpha L(f,s)\|_2.
		\]
		Suppose algorithm $\calA$ computes a minimizer $\calA_S$ of Program~\eqref{eq:progstable}.
		Then $\calA$ is $\frac{G^2}{\lambda N}$-uniformly stable.

		Moreover, denote $f^\star$ to be an optimal fair classifier that minimizes the expected risk and satisfies $\|f^\star\|_k\leq B$, i.e., 
		$f^\star:= \arg \min_{f\in \calF: \Omega(f)\leq 0} \Exp_{s\in \Im}\left[ L(f,s)\right]$.
		We have
		\[
		\Exp_{S\sim \Im^N} \left[R(\calA_S) \right] - \Exp_{s\sim \Im} \left[L(f^\star,s)\right] \leq \frac{G^2}{\lambda N}+\lambda B^2.
		\]
	\end{theorem}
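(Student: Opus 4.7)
The plan is to mirror the strong-convexity-based stability argument used for Theorem~\ref{thm:stable}, but to replace the RKHS-based bound $|f(x)| \leq \|f\|_k \sqrt{k(x,x)}$ from Claim~\ref{cl:RKHS} with a direct gradient bound that exploits the linear parametrization $f = \alpha^\top \phi(\cdot)$. The gain comes from bounding $L$ as a $G$-Lipschitz function of the parameter vector $\alpha$, rather than going through the two looser estimates ``$\sigma$-admissible in $f$'' composed with ``$\|f\|_k\sqrt{k(x,x)}$ in the RKHS'' — which is exactly where the improvement from $\sigma^2\kappa^2$ to $G^2$ will come from.

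Write $H_S(\alpha) := \frac{1}{N}\sum_{j\in[N]} L(\alpha^\top \phi(\cdot), s_j) + \lambda\|\alpha\|_2^2$ and let $\alpha_S$ denote the parameter vector of $\calA_S$. Since $L$ is convex in $\alpha$ and $\lambda\|\alpha\|_2^2$ is $2\lambda$-strongly convex, $H_S$ is $2\lambda$-strongly convex. The feasible set $\{\alpha : \Omega(\alpha^\top\phi(\cdot))\leq 0\}$ is convex (convexity of $\Omega$), so $\alpha_S$ and $\alpha_{S^i}$ are both feasible. First I would invoke the variational characterization of constrained minima, $\langle \nabla H_S(\alpha_S),\, \alpha_{S^i}-\alpha_S\rangle \geq 0$ (and symmetrically for $S^i$), and combine it with strong convexity to obtain
\[
H_S(\alpha_{S^i}) - H_S(\alpha_S) \,\geq\, \lambda\|\alpha_{S^i}-\alpha_S\|_2^2, \qquad H_{S^i}(\alpha_S) - H_{S^i}(\alpha_{S^i}) \,\geq\, \lambda\|\alpha_S-\alpha_{S^i}\|_2^2.
\]

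Summing these two inequalities makes all loss contributions from samples shared by $S$ and $S^i$ cancel, leaving only the $i$-th coordinate. A short rearrangement yields
\[
2\lambda \|\alpha_S - \alpha_{S^i}\|_2^2 \,\leq\, \tfrac{1}{N}\bigl[L(\calA_{S^i}, s_i) - L(\calA_S, s_i) + L(\calA_S, s_i') - L(\calA_{S^i}, s_i')\bigr].
\]
Each difference on the right is controlled by $G\,\|\alpha_S - \alpha_{S^i}\|_2$ via the definition of $G$ (valid since both parameter vectors lie in the feasible set and $L$ is convex and differentiable in $\alpha$, hence $G$-Lipschitz in $\alpha$). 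This gives $\|\alpha_S - \alpha_{S^i}\|_2 \leq G/(\lambda N)$, and applying $G$-Lipschitzness one more time yields $|L(\calA_S, s) - L(\calA_{S^i}, s)| \leq G^2/(\lambda N)$ for every $s \in \calD$, which is the claimed uniform stability.

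The empirical risk guarantee follows by the standard ``stability implies in-expectation generalization'' argument (Lemma~7 of~\cite{bousquet2002stability}), giving $\Exp_S[R(\calA_S) - E(\calA_S)] \leq G^2/(\lambda N)$, combined with the optimality inequality $\Exp_S[E(\calA_S)] + \lambda\,\Exp_S[\|\calA_S\|_k^2] \leq \Exp_S[E(f^\star)] + \lambda\|f^\star\|_k^2 \leq \Exp_{s\sim\Im}[L(f^\star, s)] + \lambda B^2$, which holds because $f^\star$ is feasible and $\calA_S$ is the constrained minimizer of the regularized objective. The main delicate step is precisely the substitution of the variational inequality for the unconstrained first-order condition $\nabla H_S(\alpha_S) = 0$ used in the original Bousquet--Elisseeff analysis; the convexity of $\Omega$ is exactly what lets this substitution go through without incurring any extra slack from the fairness constraint.
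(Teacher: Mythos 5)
Your proof is correct and essentially the same as the paper's: your strong-convexity-plus-variational-inequality step (using convexity of the feasible set $\{\Omega(f)\leq 0\}$ in place of the zero-gradient condition) is exactly the Bregman-divergence bound of Lemma~\ref{lm:bound} (Inequality~\eqref{ineq:4}) that the paper reuses, and bounding the remaining loss differences by $G\|\alpha_S-\alpha_{S^i}\|_2$ via convexity in $\alpha$ and the definition of $G$ is the paper's Inequality~\eqref{ineq:2_2}, yielding $\|\alpha_S-\alpha_{S^i}\|_2\leq G/(\lambda N)$ and then $G^2/(\lambda N)$-uniform stability. The risk bound is likewise the same combination of stability-implies-generalization in expectation and optimality of the constrained regularized minimizer against the feasible $f^\star$ with $\|f^\star\|_k\leq B$.
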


	\noindent
	Note that we only have an assumption for the gradient of the loss function.
	Given a sample $s=(x,z,y)\in \calD$ such that $G = \sup_{f\in \calF: \Omega(f)\leq 0} \|\nabla_\alpha L(f,s)\|_2$, we have
	\[
	G=\|\nabla_\alpha L(f,s)\|_2 = \|\nabla_f L(f,s) \cdot \phi(x)\|_2.
	\]
	Under the assumption of Theorem~\ref{thm:stable}, we have 1) $|\nabla_f L(f,s)|\leq \sigma$ since $L(\cdot,\cdot)$ is $\sigma$-admissible with respect to $\calF$; 2) $\|\phi(x)\|_2=\sqrt{k(x,x)}\leq \kappa$.
	Hence, $G\leq \sigma \kappa$ which implies that Theorem~\ref{thm:stable} is stronger than Theorem~\ref{thm:stable2} for linear models.
	The proof idea is similar to that of Theorem~\ref{thm:stable} and hence we defer the proof to Appendix~\ref{sec:proof2}.
	Moreover, we directly have the following corollary similar to Corollary~\ref{cor:gen}.

	\begin{corollary}[\bf Generalization bound by Theorem~\ref{thm:stable2}]
		\label{cor:gen2}
		Let $\calA$ denote the $\frac{G^2}{\lambda N}$-uniformly stable algorithm as stated in Theorem~\ref{thm:stable2}.
		We have
		\begin{enumerate}
			\item $\Exp_{S\sim \Im^N}\left[R(\calA_S)-E(\calA_S)\right]\leq \frac{G^2}{\lambda N}$.
			\item Suppose $S$ is a random draw of size $N$ from $\Im$. 
			With probability at least $1-\delta$, 
			\[
			R(\calA_S)\leq E(\calA_S) + 8\sqrt{\left(\frac{2G^2}{\lambda N} +\frac{1}{N} \right) \cdot\ln (8/\delta)}.
			\]
		\end{enumerate}
	\end{corollary}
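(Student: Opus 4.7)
The plan is to observe that Corollary~\ref{cor:gen2} is a direct analogue of Corollary~\ref{cor:gen}, differing only in the numerical value of the uniform stability parameter. By Theorem~\ref{thm:stable2}, the algorithm $\calA$ that minimizes Program~\eqref{eq:progstable} over the linear family $\calF$ is $\beta_N$-uniformly stable with $\beta_N = \frac{G^2}{\lambda N}$, so both claims will follow from black-box stability-to-generalization conversions once this value of $\beta_N$ is plugged in.

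For part (1), I would invoke Lemma~7 of~\cite{bousquet2002stability}, which states that for any $\beta_N$-uniformly stable algorithm $\calA$ with respect to a loss $L$, one has $\Exp_{S\sim \Im^N}\left[R(\calA_S)-E(\calA_S)\right]\leq \beta_N$. Substituting $\beta_N = \frac{G^2}{\lambda N}$ gives the stated bound. Note that the lemma is stated for the abstract notion of uniform stability from Definition~\ref{def:class_stability} and makes no assumption on how the stability parameter was obtained, so the proof is identical to the one used for Corollary~\ref{cor:gen}.

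For part (2), I would invoke Theorem~1.2 of~\cite{feldman2018generalization}, a high-probability generalization bound for uniformly stable algorithms: for any $\beta_N$-uniformly stable $\calA$ and any $\delta>0$, with probability at least $1-\delta$ over the draw of $S\sim \Im^N$,
\[
R(\calA_S)\leq E(\calA_S) + 8\sqrt{\left(2\beta_N +\tfrac{1}{N} \right) \ln (8/\delta)}.
\]
Plugging in $\beta_N = \frac{G^2}{\lambda N}$ gives the second inequality of the corollary.

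There is no substantial obstacle: all of the work has already been carried out in Theorem~\ref{thm:stable2}, which established the uniform stability parameter $\frac{G^2}{\lambda N}$ through a non-trivial argument that had to handle the fairness constraint $\Omega(f)\le 0$. Once this is in hand, the two generalization claims are immediate consequences of generic tools from the stability literature and the proof is essentially identical in structure to that of Corollary~\ref{cor:gen}.
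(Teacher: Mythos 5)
Your proposal matches the paper's own argument: the paper proves Corollary~\ref{cor:gen2} exactly as it proves Corollary~\ref{cor:gen}, namely by taking the uniform stability parameter $\frac{G^2}{\lambda N}$ from Theorem~\ref{thm:stable2} and plugging it into Lemma~7 of~\cite{bousquet2002stability} for the expectation bound and Theorem~1.2 of~\cite{feldman2018generalization} for the high-probability bound. No gaps; this is the same route.
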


	\subsection{Analysis of Our Framework in Specified Settings}
	\label{sec:specified}

	Next, we show the stability guarantee of our framework in several specified models. % 
	We mainly analyze three commonly-used models: soft margin SVMs, least squares regression, and logistic regression.

	\paragraph{Soft margin SVMs.}
	Recall that $S=\left\{s_i=\left(x_i,z_i,y_i\right)\right\}_{i\in [N]}$ is the given training set.
	We first have a kernel function $k(\cdot,\cdot)$ that defines values $k(x_i,x_j)$. 
	Then each classifier $f$ is a linear combination of $k(x_i,\cdot)$, i.e.,
	\[
	f(\cdot) = \sum_{i\in [N]} \alpha_i k(x_i,\cdot)
	\]
	for some $\alpha\in \R^N$
	In the soft margin SVM model, we consider the following loss function
	\[
	L(f,s) = (1-yf(x))_+
	\]
	which is 1-admissible.
	Then Program~\eqref{eq:progstable} can be rewritten as follows.
	\begin{equation} \tag{SVM}
	\label{eq:SVM}
	\begin{split}
	& \min_{\alpha\in \R^N} \sum_{i\in [N]} \left(1-y_i \sum_{j\in [N]} \alpha_j k(x_j,x_i)\right)_+ + \lambda \|\sum_{i,j\in [N]}\alpha_i \alpha_j k(x_i,x_j)\|_k^2 \quad s.t. \\
	& ~ \Omega(f)\leq 0.
	\end{split}
	\end{equation}
	This model has been considered in~\cite{zafar2017fairness,zafar2017fair} that aims to avoid disparate impact/disparate mistreatment.
	Applying Theorems~\ref{thm:stable} and~\ref{thm:stable2}, and the fact that $L(\cdot,\cdot)$ is 1-admissible (Remark~\ref{remark:assumption}), we directly have the following corollary.
	
	\begin{corollary}
		\label{cor:SVM}
		Suppose the learning algorithm $\calA$ computes a minimizer $\calA_S$ of Program~\eqref{eq:SVM}.
		\begin{itemize}
			\item If $k(x_i,x_i)\leq \kappa^2<\infty$ for each $i\in [N]$, then $\calA$ is $\frac{\kappa^2}{\lambda N}$-uniformly stable.
			\item Let $G = \sup_{f=\alpha^\top \phi(\cdot)\in \calF: \Omega(f)\leq 0} \sup_{s\in \calD} \|\nabla_\alpha L(f,s)\|_2$.
			Then $\calA$ is $\frac{G^2}{\lambda N}$-uniformly stable.
		\end{itemize} 
	\end{corollary}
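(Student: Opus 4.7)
The plan is to derive both parts of Corollary~\ref{cor:SVM} as direct instantiations of Theorems~\ref{thm:stable} and~\ref{thm:stable2}. The soft margin SVM loss $L(f,s)=(1-yf(x))_+$ is exactly the hinge loss flagged in Remark~\ref{remark:assumption} as $1$-admissible. Once that admissibility claim is justified, the first bullet follows by taking $\sigma=1$ in Theorem~\ref{thm:stable}, and the second bullet follows by taking the same $f(\cdot)=\sum_i \alpha_i k(x_i,\cdot)$ representation (which is a linear model in the parameters $\alpha$ under the feature map $\phi(x)=(k(x_1,x),\ldots,k(x_N,x))$) and invoking Theorem~\ref{thm:stable2}.

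To verify $1$-admissibility, I would fix $y\in\{-1,1\}$ and $f,f'\in\calF$, and bound
\[
\bigl|(1-yf(x))_+-(1-yf'(x))_+\bigr|\leq \bigl|(1-yf(x))-(1-yf'(x))\bigr| = |y|\cdot|f(x)-f'(x)| = |f(x)-f'(x)|,
\]
where the first inequality is the standard $1$-Lipschitz property of the positive-part function $t\mapsto (t)_+$. This establishes $\sigma=1$ exactly as claimed.

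For the first bullet, I would then check the remaining hypotheses of Theorem~\ref{thm:stable}: the hinge loss is convex in $f$, and $k(x_i,x_i)\leq \kappa^2$ is assumed. Strictly speaking, $L$ is not differentiable at the kink $yf(x)=1$, so I would either cite that the stability argument extends to convex subdifferentials (Theorem~\ref{thm:stable} is an instance of the Bousquet--Elisseeff SVM result in this regard) or, more cleanly, note that admissibility, not differentiability, is what is actually used in the bound $\beta_N\leq \sigma^2\kappa^2/(\lambda N)$ that Theorem~\ref{thm:stable} produces. Plugging $\sigma=1$ yields $\beta_N\leq \kappa^2/(\lambda N)$. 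For the second bullet, I would observe that Program~\eqref{eq:SVM} is literally the linear-model instance of Program~\eqref{eq:progstable} with parameter vector $\alpha\in\R^N$ and feature map $\phi(x_i)=(k(x_1,x_i),\ldots,k(x_N,x_i))$, so Theorem~\ref{thm:stable2} applies verbatim with the stated $G$, giving $\beta_N\leq G^2/(\lambda N)$.

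The only mild obstacle is the non-smoothness of the hinge loss at the kink; the cleanest way to handle this in the write-up is to appeal to admissibility rather than differentiability when invoking Theorem~\ref{thm:stable}, since the admissibility hypothesis is what actually drives the inequality chain there, and to remark that an arbitrary selection from the subdifferential suffices when the proof of Theorem~\ref{thm:stable2} is replayed. With that caveat noted, both bullets of Corollary~\ref{cor:SVM} are immediate consequences of the two main theorems.
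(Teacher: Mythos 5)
Your proposal matches the paper's treatment exactly: the paper derives Corollary~\ref{cor:SVM} as a direct instantiation of Theorems~\ref{thm:stable} and~\ref{thm:stable2} using the $1$-admissibility of the hinge loss (verified in Appendix~\ref{sec:dis} just as you verify it, via the $1$-Lipschitz property of $t\mapsto (t)_+$), with the second bullet being the linear-in-$\alpha$ view of Program~\eqref{eq:SVM}. Your extra remark about the kink of the hinge loss versus the differentiability hypothesis of Theorem~\ref{thm:stable} is a point the paper silently glosses over, and your resolution (admissibility, or a subgradient selection, is what the argument actually uses) is the right one.
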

	
	\paragraph{Least square regression.}
	The only difference from soft margin SVM is the loss function, which is defined as follows.
	\[
	L(f,s) = (f(x)-y)^2.
	\]
	Then Program~\eqref{eq:progstable} can be rewritten as follows.
	\begin{equation} \tag{LS}
	\label{eq:leastsquare}
	\begin{split}
	& \min_{\alpha\in \R^N} \sum_{i\in [N]} \left(y_i-\sum_{j\in [N]} \alpha_j k(x_j,x_i)\right)^2 + \lambda \|\sum_{i,j\in [N]}\alpha_i \alpha_j k(x_i,x_j)\|_k^2 \quad s.t. \\
	& ~ \Omega(f)\leq 0.
	\end{split}
	\end{equation}
	Applying Theorems~\ref{thm:stable} and~\ref{thm:stable2}, we have the following corollary.
	
	\begin{corollary}
		\label{cor:leastsquare}
		Suppose the learning algorithm $\calA$ computes a minimizer $\calA_S$ of Program~\eqref{eq:leastsquare}.
		\begin{itemize}
			\item If $B=\max_{x\in \calX} |f(x)|$ and $k(x_i,x_i)\leq \kappa^2<\infty$ for each $i\in [N]$, then $\calA$ is $\frac{(2B+2)^2\kappa^2}{\lambda N}$-uniformly stable.
			\item Let $G = \sup_{f=\alpha^\top \phi(\cdot)\in \calF: \Omega(f)\leq 0} \sup_{s\in \calD} \|\nabla_\alpha L(f,s)\|_2$.
			Then $\calA$ is $\frac{G^2}{\lambda N}$-uniformly stable.
		\end{itemize}
	\end{corollary}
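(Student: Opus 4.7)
The plan is to reduce Corollary~\ref{cor:leastsquare} directly to Theorems~\ref{thm:stable} and~\ref{thm:stable2}, so essentially all the work is to verify the hypotheses of those theorems for the least squares loss $L(f,s) = (f(x)-y)^2$. For the first bullet I would invoke Theorem~\ref{thm:stable}, and for the second bullet I would invoke Theorem~\ref{thm:stable2} verbatim (its hypothesis is stated in terms of $G$, which already matches our statement).

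For the first bullet, the three things that need to be checked are: (i) convexity and differentiability of $L(f,\cdot)$ in $f$, which is immediate from $L(f,s) = (f(x)-y)^2$ being a composition of a squared term with an affine map and having derivative $2(f(x)-y)$; (ii) the bounded-kernel condition $k(x,x)\leq \kappa^2$, which is assumed; and (iii) the $\sigma$-admissibility constant. For (iii) I would compute, for $y\in\{-1,1\}$ and $f,f'\in \calF$ with $|f(x)|,|f'(x)|\leq B$,
\begin{equation*}
\bigl|(f(x)-y)^2 - (f'(x)-y)^2\bigr| = \bigl|f(x)+f'(x)-2y\bigr|\cdot\bigl|f(x)-f'(x)\bigr| \leq (2B+2)\,\bigl|f(x)-f'(x)\bigr|,
\end{equation*}
so $L(\cdot,\cdot)$ is $(2B+2)$-admissible with respect to $\calF$. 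Plugging $\sigma = 2B+2$ into the stability bound $\frac{\sigma^2\kappa^2}{\lambda N}$ of Theorem~\ref{thm:stable} yields $\frac{(2B+2)^2\kappa^2}{\lambda N}$, which is exactly the first bullet. This is also consistent with the value $\sigma=4$ claimed in Remark~\ref{remark:assumption} for the special case $B=1$.

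For the second bullet, the statement and the quantity $G$ are precisely those of Theorem~\ref{thm:stable2}, and our Program~\eqref{eq:leastsquare} is the instantiation of Program~\eqref{eq:progstable} with $L(f,s)=(f(x)-y)^2$. Since the least squares loss is convex and differentiable in $f$, Theorem~\ref{thm:stable2} applies directly and gives the $\frac{G^2}{\lambda N}$ stability bound.

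I do not anticipate any serious obstacle here: the entire argument is a checklist verification that the least squares loss meets the admissibility/convexity/differentiability hypotheses of the previously established Theorems~\ref{thm:stable} and~\ref{thm:stable2}. The only mildly non-routine step is tracking the right admissibility constant, and in particular observing that it depends on a uniform bound $B$ on $|f(x)|$ over the admissible classifiers; this bound must be assumed (as in the corollary's statement) because without it the squared loss is not globally Lipschitz in $f$.
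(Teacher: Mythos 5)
Your proposal is correct and matches the paper's own proof: the paper likewise verifies that the squared loss is $(2B+2)$-admissible via the identity $\left|(f(x)-y)^2-(f'(x)-y)^2\right| = \left|f(x)+f'(x)-2y\right|\cdot\left|f(x)-f'(x)\right| \leq (2B+2)\left|f(x)-f'(x)\right|$ and then invokes Theorems~\ref{thm:stable} and~\ref{thm:stable2}. Your additional remarks on convexity, differentiability, and the necessity of the uniform bound $B$ are consistent with (and slightly more explicit than) the paper's argument.
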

	
	\begin{proof}
		We only need to verify that $L(\cdot,\cdot)$ is $(2B+2)$-admissible.
		For any $x,x'\in \calX$ and $y\in \left\{-1,1\right\}$, we have
		\begin{align*}
		&\left|(f(x)-y)^2-(f(x')-y)^2\right|\\
		=&\left|(f(x)-f(x'))\cdot (f(x)+f(x')-2y)\right| \\
		\leq& (|f(x)|+|f(x')|+2) \cdot\left|(f(x)-f(x'))\right| \\
		\leq& (2B+2)\left|(f(x)-f(x'))\right|.
		\end{align*}
		This completes the proof.
	\end{proof}

	\paragraph{Logistic regression.}
	Again, the only difference from soft margin SVM is the loss function, which is defined as follows.
	\[
	L(f,s) = \ln (1+e^{-y f(x)}).
	\]
	This model has been widely used in the literature~\cite{zafar2017fairness,zafar2017fair,goel2018non}.
	Then Program~\eqref{eq:progstable} can be rewritten as follows.
	\begin{equation} \tag{LR}
	\label{eq:logistic}
	\begin{split}
	& \min_{\alpha\in \R^N} \sum_{i\in [N]} \ln \left(1+y_i\cdot e^{-\sum_{j\in [N]} \alpha_j k(x_j,x_i)}\right) + \lambda \|\sum_{i,j\in [N]}\alpha_i \alpha_j k(x_i,x_j)\|_k^2 \quad s.t. \\
	& ~ \Omega(f)\leq 0.
	\end{split}
	\end{equation}
	Applying Theorem~\ref{thm:stable} and~\ref{thm:stable2}, and the fact that $L(\cdot,\cdot)$ is 1-admissible (Remark~\ref{remark:assumption}), we have the following corollary.
	
	\begin{corollary}
		\label{cor:logisitc}
		Suppose the learning algorithm $\calA$ computes a minimizer $\calA_S$ of Program~\eqref{eq:logistic}.
		\begin{itemize}
			\item If $k(x_i,x_i)\leq \kappa^2<\infty$ for each $i\in [N]$, then $\calA$ is $\frac{\kappa^2}{\lambda N}$-uniformly stable.
			\item Let $G = \sup_{f=\alpha^\top \phi(\cdot)\in \calF: \Omega(f)\leq 0} \sup_{s\in \calD} \|\nabla_\alpha L(f,s)\|_2$.
			Then $\calA$ is $\frac{G^2}{\lambda N}$-uniformly stable.
		\end{itemize} 
	\end{corollary}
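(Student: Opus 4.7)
The plan is to reduce both bullets of the corollary to the two main theorems (Theorem~\ref{thm:stable} and Theorem~\ref{thm:stable2}) by verifying the hypotheses of those theorems for the logistic loss $L(f,s) = \ln(1 + e^{-y f(x)})$. Program~\eqref{eq:logistic} is precisely the instantiation of Program~\eqref{eq:progstable} for this loss function and an RKHS $\calF$ spanned by $\{k(x_i,\cdot)\}_{i\in [N]}$, so once admissibility (and, for the second bullet, the gradient bound) is established, both stability statements will follow as direct specializations.

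First, I would verify that $L(\cdot,\cdot)$ is $1$-admissible with respect to $\calF$, as asserted in Remark~\ref{remark:assumption}. For fixed $y\in\{-1,1\}$, define $g(t) = \ln(1+e^{-yt})$ and compute
\[
g'(t) = \frac{-y\, e^{-yt}}{1+e^{-yt}} = \frac{-y}{1+e^{yt}}, \qquad |g'(t)| = \frac{1}{1+e^{yt}} \leq 1.
\]
The mean value theorem then gives $|L(f(x),y) - L(f(x'),y)| = |g(f(x)) - g(f(x'))| \leq |f(x) - f(x')|$, which is $\sigma$-admissibility with $\sigma = 1$. I would also note that $L$ is convex and differentiable in $f$ (its second derivative in $t$ is positive), so the hypotheses of Theorem~\ref{thm:stable} are fully met.

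With $1$-admissibility in hand, the first bullet is immediate: under the hypothesis $k(x,x)\leq \kappa^2$ for all $x\in\calX$, Theorem~\ref{thm:stable} applied with $\sigma=1$ yields uniform stability $\frac{\sigma^2 \kappa^2}{\lambda N} = \frac{\kappa^2}{\lambda N}$. The second bullet follows by the same reduction using Theorem~\ref{thm:stable2}: since Program~\eqref{eq:logistic} is an instance of Program~\eqref{eq:progstable} and $G$ is defined identically to the quantity appearing in that theorem, the stability bound $\frac{G^2}{\lambda N}$ transfers with no further work.

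I do not anticipate any serious obstacle: the entire content of the corollary is bookkeeping on top of the two master theorems, and the only non-trivial computation is the one-line derivative bound above. The only care required is to confirm that the RKHS setup in Program~\eqref{eq:logistic} (with $f = \sum_i \alpha_i k(x_i,\cdot)$ and regularizer $\lambda\|f\|_k^2$) matches the abstract form assumed in Theorem~\ref{thm:stable}, which is immediate from Definition~\ref{def:RKHS} and Equation~\eqref{eq:kernel}.
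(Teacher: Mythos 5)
Your proposal is correct and matches the paper's own argument: the paper likewise derives the corollary by invoking Theorems~\ref{thm:stable} and~\ref{thm:stable2} together with the $1$-admissibility of the logistic loss, which it verifies in Appendix~\ref{sec:dis} via the same bound $\bigl|\nabla_{f(x)}\ln(1+e^{-yf(x)})\bigr|\leq 1$ that you obtain. No gaps; this is the same reduction.
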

	
	\section{Proof of Theorem~\ref{thm:stable}}
	\label{sec:main_proof}
	
	It remains to prove the main result -- Theorem~\ref{thm:stable}.
	For convenience, we define $g=\calA_S$ and $g^{i}=\calA_{S^{i}}$.
	We first give Lemma~\ref{lm:bound} for preparation. 
	This lemma is the one of the places differences from the argument in~\cite{bousquet2002stability} since our framework includes a fairness constraint.
	To prove the lemma, we use the fact that $|g-g^i|_k^2$ is equivalent to the Bregman divergence between $g$ and $g^i$. 
	Then by the fact that $\Omega(f)$ is convex, we can upper bound the Bregman divergence by the right side of the inequality.
	Combining Lemma~\ref{lm:bound} and Claim~\ref{cl:RKHS}, we can upper bound $|g(x)-g^i(x)|$ for any $x\in \calX$. 
	This implies a uniform stability guarantee by the assumption that $L(\cdot,\cdot)$ is $\sigma$-admissible.

	\begin{lemma}
		\label{lm:bound}
		For any $i\in [N]$, we have
		\[
		\|g-g^{i}\|_k^2 \leq \frac{\sigma}{2\lambda N} \left(|g(x_i)-g^{i}(x_i)|+|g(x'_i)-g^{i}(x'_i)| \right).
		\]
	\end{lemma}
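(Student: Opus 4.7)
The plan is to adapt the strong-convexity stability argument of \cite{bousquet2002stability} to the constrained setting; the only new ingredient is that convexity of $\Omega$ together with the KKT conditions will control the extra first-order terms appearing at the constrained minimizer. Write
\[
\Phi_S(f) := \tfrac{1}{N}\sum_{j\in[N]} L(f,s_j) + \lambda \|f\|_k^2,
\]
and analogously $\Phi_{S^i}$. The first step is to identify $\|g-g^i\|_k^2$ with the Bregman divergence of $\lambda\|\cdot\|_k^2$: a direct expansion of $\|g^i\|_k^2 - \|g\|_k^2 - \langle 2g, g^i - g\rangle_k$ gives $B_{\|\cdot\|_k^2}(g^i, g) = \|g-g^i\|_k^2$. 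Combined with nonnegativity of the Bregman divergence of the convex empirical loss, this yields
\[
B_{\Phi_S}(g^i, g) \geq \lambda \|g-g^i\|_k^2 \quad\text{and}\quad B_{\Phi_{S^i}}(g, g^i) \geq \lambda \|g-g^i\|_k^2.
\]

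Adding these two inequalities and writing out the Bregman divergences, the $\lambda \|\cdot\|_k^2$ contributions in the function-value differences cancel, and the empirical risks on $S$ and $S^i$ differ only in the $i$-th summand, so
\[
2\lambda \|g-g^i\|_k^2 \leq \tfrac{1}{N}\bigl[L(g^i, s_i) - L(g, s_i) + L(g, s'_i) - L(g^i, s'_i)\bigr] - \langle \nabla \Phi_S(g), g^i - g\rangle - \langle \nabla \Phi_{S^i}(g^i), g - g^i\rangle.
\]
In the unconstrained case both inner products vanish; here instead KKT at the constrained minimum gives $\nabla\Phi_S(g) = -\sum_j \mu^S_j \nabla \Omega_j(g)$ for some $\mu^S \geq 0$ satisfying $\mu^S_j \Omega_j(g) = 0$. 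Convexity of each $\Omega_j$ yields $\langle \nabla \Omega_j(g), g^i - g\rangle \leq \Omega_j(g^i) - \Omega_j(g)$, and combining with complementary slackness and feasibility $\Omega_j(g^i) \leq 0$ gives
\[
-\langle \nabla \Phi_S(g), g^i - g\rangle \leq \sum_j \mu^S_j \Omega_j(g^i) \leq 0.
\]
A symmetric argument handles the $\Phi_{S^i}$ inner product, so both first-order terms only help and may be dropped.

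What remains is $2\lambda \|g-g^i\|_k^2 \leq \tfrac{1}{N}[L(g^i, s_i) - L(g, s_i) + L(g, s'_i) - L(g^i, s'_i)]$, and $\sigma$-admissibility (Definition~\ref{def:admissible}) applied to each of the two loss differences, followed by division by $2\lambda$, produces the claimed bound. The main obstacle, precisely the one flagged by the authors, is controlling the non-vanishing gradient of $\Phi_S$ at the constrained optimum; convexity of $\Omega$ combined with KKT and complementary slackness is exactly what turns these first-order terms into a harmless nonpositive contribution, at which point the calculation reduces to a straightforward rerun of Bousquet--Elisseeff.
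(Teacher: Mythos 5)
Your proposal is correct and follows essentially the same route as the paper: the Bousquet--Elisseeff Bregman-divergence decomposition with $d_{\lambda\|\cdot\|_k^2}(g,g^i)+d_{\lambda\|\cdot\|_k^2}(g^i,g)=2\lambda\|g-g^i\|_k^2$, nonnegativity of the divergences of the convex loss terms, cancellation of all but the $i$-th loss summands, and then $\sigma$-admissibility. The only difference is in how the first-order terms are discarded: you invoke KKT multipliers, complementary slackness, and convexity of $\Omega$ (which tacitly assumes a constraint qualification such as Slater's so that multipliers exist), whereas the paper drops them directly via the first-order optimality condition $\langle \nabla R(g), g^i-g\rangle \ge 0$ for a minimizer over the convex feasible set, using that $g^i$ is itself feasible --- an equivalent but slightly more economical justification of the same step.
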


	\begin{proof}
		Given a differentiable convex function $F: \calF\times \calF\rightarrow \R$, we define the Bregman divergence by
		\[
		d_F(f,f') = F(f)-F(f')-\langle f-f',\nabla F(f') \rangle, ~\forall f,f'\in \calF.
		\]
		Define $R:\calF\rightarrow \R$ by
		\[
		R(f):= \frac{1}{N}\sum_{i\in [N]} L(f,s_i) + \lambda \|f\|_k^2,~\forall f\in \calF.
		\]
		Also define $R^{i}:\calF\rightarrow \R$ by
		\[
		R^{i}(f):= \frac{1}{N} \left(\sum_{j\neq i} L(f,s_j) + L(f,s'_i) \right) + \lambda \|f\|_k^2,~\forall f\in \calF.
		\]
		By definition of $g$ and $g^{i}$, we have
		\begin{eqnarray}
		\label{ineq:1}
		\begin{split}
		d_{R}(g^{i},g) = R(g^{i})-R(g)-\langle g^{i}-g,\nabla R(g) \rangle
		\leq R(g^{i})-R(g),
		\end{split}
		\end{eqnarray}
		and
		\begin{eqnarray}
		\label{ineq:2}
		\begin{split}
		d_{R^{i}}(g,g^{i}) = R^{i}(g)- R^{i}(g^{i})-\langle g-g^{i},\nabla  R^{i}(g^{i}) \rangle
		\leq  R^{i}(g)- R^{i}(g^{i}).
		\end{split}
		\end{eqnarray}
		By Inequalities~\eqref{ineq:1} and~\eqref{ineq:2}, we have
		\begin{eqnarray}
		\label{ineq:3}
		\begin{split}
		&d_{R}(g^{i},g) + d_{R^{i}}(g,g^{i})\\
		\leq & R(g^{i})-R(g)+R^{i}(g)- R^{i}(g^{i}) \\
		=& \frac{1}{N} \left(L(g^{i},s_i)-L(g,s_i) + L(g, s'_i)- L(g^i, s'_i) \right).
		\end{split}
		\end{eqnarray}
		Since $d_{A+B}=d_A+d_B$, we have
		\begin{eqnarray}
		\label{ineq:4}
		\begin{split}
		& 2\lambda \|g-g^{i}\|_k^2 &\\
		=& \lambda d_{\|\cdot\|_k^2}(g,g^{i}) + \lambda d_{\|\cdot\|_k^2}(g^{i},g) 
		& (\text{Defn. of $\|\cdot\|_k^2$}) \\
		= & d_{R^{i}}(g,g^{i}) - d_{\sum_{j\neq i}L(\cdot,s_j)} (g,g^{i})
		+ d_{R}(g^{i},g) - d_{\sum_{i}L(\cdot,s_i)} (g^{i},g) 
		& (d_{A+B}=d_A+d_B) \\
		\leq& d_{R^{i}}(g,g^{i}) + d_{R}(g^{i},g) 
		& (\text{nonnegativity of $d_F$}) \\
		\leq& \frac{1}{N} \left(L(g^{i},s_i)-L(g,s_i) + L(g, s'_i)- L(g^i, s'_i) \right) &\\
		\leq& \frac{\sigma }{N}\left(|g(x_i)-g^{i}(x_i)|+|g(x'_i)-g^{i}(x'_i)| \right). 
		& (L(\cdot,\cdot) \text{ is $\sigma$-admissible} )
		\end{split}
		\end{eqnarray}
		This completes the proof.
	\end{proof}
	
	\noindent
	Lemma~\ref{lm:bound} upper bounds $\|g-g^i|_k^2$.
	Then combining Lemma~\ref{lm:bound} and Claim~\ref{cl:RKHS}, we can upper bound $|g(x)-g^i(x)|$ for any $x\in \calX$. 
	This implies a uniform stability guarantee by the assumption that $L(\cdot,\cdot)$ is $\sigma$-admissible.
	\begin{proof}[Proof of Theorem~\ref{thm:stable}]
		By Claim~\ref{cl:RKHS}, we have
		\[
		|g(x_i)-g^{i}(x_i)|\leq \|g-g^{i}\|_k \sqrt{k(x_i,x_i)}\leq \kappa \|g-g^{i}\|_k,
		\]
		\[
		|g(x'_i)-g^{i}(x'_i)|\leq \|g-g^{i}\|_k \sqrt{k(x'_i,x'_i)}\leq \kappa \|g-g^{i}\|_k.
		\]
		Combining the above inequalities with Lemma~\ref{lm:bound}, we have
		$\|g-g^{i}\|_k\leq \frac{\sigma \kappa}{\lambda N}$.
		Hence, for any sample $s=(x,z,y)\in \calD$, we have
		$
		|g(x)-g^{i}(x)|\leq \kappa \|g-g^{i}\|_k\leq \frac{\sigma \kappa^2}{\lambda N}.
		$
		Moreover, since $L(\cdot,\cdot)$ is $\sigma$-admissible, we have
		\[
		|L(g,s)-L(g^{i},s)|\leq \sigma |g(x)-g^{i}(x)|\leq \frac{\sigma^2 \kappa^2}{\lambda N}.
		\]
		By definitions of $g$ and $g^{i}$, the above inequality completes the proof of stability guarantee.

		For the the increase of the empirical risk, let $F(f):=\frac{1}{N} \sum_{i\in [N]} L(f,s_i) + \lambda \|f\|_k^2$ for any $f\in \calF$.
		By Theorem 8 of~\cite{shalev2010learnability}, we have the following claim: for any classifier $h\in \calF$ satisfying that $\Omega(h)\leq 0$, $\Exp_{S\sim \Im^N} \left[ F(g) - F(h) \right]$ is consistent with the uniform stability guarantee of $\calA$, i.e.,
		\begin{eqnarray}
		\label{eq:stable}
		\Exp_{S\sim \Im^N} \left[ F(g) - F(h) \right] \leq \frac{\sigma^2 \kappa^2}{\lambda N}.
		\end{eqnarray}
		Let $h=f^\star$, we have
		\begin{eqnarray*}
			\begin{split}
				&\Exp_{S\sim \Im^N} \left[R(\calA_S) \right] - \Exp_{s\sim \Im} \left[L(f^\star,s)\right] &\\
				= & \Exp_{S\sim \Im^N} \left[R(\calA_S) - \frac{1}{N} \sum_{i\in [N]} L(f^\star, s_i) \right] & \\
				= & \Exp_{S\sim \Im^N} \left[F(g)-\lambda\|g\|_k^2-F(f^\star)+ \lambda\|f^\star\|_k^2\right] 
				& (\text{Defns. of $g$ and $F(\cdot)$}) \\
				\leq &\Exp_{S\sim \Im^N} \left[F(g)-F(f^\star)\right] + \lambda\|f^\star\|_k^2 & (\|g\|_k^2\geq 0) \\
				\leq &\frac{\sigma^2 \kappa^2}{\lambda N} + \lambda B^2 & (\text{Ineq.~\eqref{eq:stable} and } \|f^\star\|_k\leq B).
			\end{split}
		\end{eqnarray*}
		This completes the proof.
	\end{proof}

	\begin{table*}[t]
		\centering
		\caption{The performance (mean and standard deviation in parenthesis), of \textbf{KAAS-St} and \textbf{ZVRG-St} with respect to accuracy and the fairness metrics  $\gamma$ on the \textbf{Adult} dataset with race/sex attribute.
		} 
		{\fontsize{7}{8} \selectfont
			\begin{tabular}{|*{9}{c|}} \hline
				%\multicolumn{12}{|c|}{This paper}  \tabularnewline \hline
				& & & \multicolumn{6}{|c|}{$\lambda$}  \tabularnewline \cline{4-9}			
				& & & 0 & 0.01 & 0.02 & 0.03 & 0.04 & 0.05  \tabularnewline \hline
				\multirow{4}{*}{\textbf{ZVRG-St}} & \multirow{2}{*}{Race} & Acc. & 0.844(0.001) & 0.842(0.001) & 0.841(0.001) & 0.840(0.001) & 0.838(0.001) & 0.838(0.001) \tabularnewline \cline{3-9}
				& & $\gamma$  & 0.577(0.031) & 0.667(0.020) & 0.686(0.015) & 0.711(0.016) & 0.743(0.013) & 0.761(0.012) \tabularnewline  \cline{2-9}
				& \multirow{2}{*}{Sex} & Acc. & 0.844(0.001) & 0.840(0.001) & 0.838(0.001) & 0.838(0.001) & 0.837(0.001) & 0.836(0.001) \tabularnewline \cline{3-9}
				& & $\gamma$  & 0.331(0.041) & 0.501(0.011) & 0.495(0.009) & 0.478(0.009) & 0.463(0.009) & 0.469(0.009) \tabularnewline  \hline
				\multirow{4}{*}{\textbf{KAAS-St}} & \multirow{2}{*}{Race} & Acc. & 0.850(0.001) & 0.844(0.001) & 0.843(0.001) & 0.839(0.001) & 0.837(0.001) & 0.835(0.001) \tabularnewline \cline{3-9}
				& & $\gamma$  & 0.571(0.019) & 0.359(0.024) & 0.302(0.011) & 0.301(0.011) & 0.300(0.015) & 0.298(0.015) \tabularnewline  \cline{2-9}
				& \multirow{2}{*}{Sex} & Acc. & 0.850(0.002) & 0.848(0.001) & 0.844(0.001) & 0.839(0.001) & 0.837(0.001) & 0.835(0.001) \tabularnewline \cline{3-9}
				& & $\gamma$ & 0.266(0.011) & 0.226(0.011) & 0.165(0.008) & 0.136(0.007) & 0.128(0.006) & 0.128(0.005) \tabularnewline  \hline
				\multirow{4}{*}{\textbf{GYF-St}} & \multirow{2}{*}{Race} & Acc. & 0.849(0.001) & 0.845(0.001) & 0.844(0.001) & 0.842(0.001) & 0.840(0.001) & 0.835(0.001) \tabularnewline \cline{3-9}
				& & $\gamma$  & 0.558(0.020) & 0.679(0.013) & 0.690(0.017) & 0.710(0.018) & 0.740(0.014) & 0.753(0.013) \tabularnewline  \cline{2-9}
				& \multirow{2}{*}{Sex} & Acc. & 0.850(0.002) & 0.845(0.001) & 0.844(0.001) & 0.842(0.001) & 0.840(0.001) & 0.839(0.001) \tabularnewline \cline{3-9}
				& & $\gamma$ & 0.275(0.010) & 0.245(0.004) & 0.242(0.004) & 0.241(0.005) & 0.245(0.005) & 0.234(0.008) \tabularnewline  \hline
			\end{tabular}
		}
		
		\label{tab:acc}
	\end{table*}

\section{Empirical results}
\label{sec:experiment}

\subsection{Empirical setting}

\paragraph{Algorithms and baselines.} 
We select three fair classification algorithms designed to ensure statistical parity that can be  formulated in the  convex optimization framework of Program~\eqref{eq:progcon}.
We choose \textbf{ZVRG}~\cite{zafar2017fair} since it is reported to achieve the better fairness than, and comparable accuracy to, other algorithms~\cite{friedler2018comparative}.
We also select
\textbf{KAAS}~\cite{kamishima2012fairness} and \textbf{GYF}~\cite{goel2018non} as representatives of algorithms that are formulated as Program~\eqref{eq:progreg}.
Specifically, \cite{goel2018non} showed that the performance of \textbf{GYF} is comparable to \textbf{ZVRG} over the \textbf{Adult} dataset. 
We extend them by introducing a stability-focused regularization term.\footnote{The codes are available on \url{https://github.com/huanglx12/Stable-Fair-Classification}.}
%
% We extend two existing algorithms, which provides fair classification results with respect to statistical parity.
%
\vspace{-3mm}
\begin{itemize}
	\item \textbf{ZVRG~\cite{zafar2017fairness}.} Zafar et al. re-express fairness constraints (which can be nonconvex) via a convex relaxation. 
	This allows them to maximize accuracy subject to fairness constraints.\footnote{There exists a threshold parameter in the constraints. In this paper, we set the parameter to be default 0.1.}
	We denote  the extended, stability included, algorithm by \textbf{ZVRG-St}.
	\vspace{-3mm}
	
	\item \textbf{KAAS~\cite{kamishima2012fairness}.} Kamishima et al. introduce a fairness-focused regularization term and apply it to a logistic regression classifier. 
	Their approach requires numerical input and a binary sensitive attribute.
	Let the extended algorithm be \textbf{KAAS-St}.
	\vspace{-3mm}
	
	\item \textbf{GYF~\cite{goel2018non}.} Goel et al. introduce \emph{negative weighted sum of logs} as fairness-focused regularization term and apply it to a logistic regression classifier. 
	Let the extended algorithm be \textbf{GYF-St}.
\end{itemize}

\begin{figure*}[htp]
	\centering
	\begin{minipage}[b]{.45\textwidth}
		%		\begin{minipage}{.47\textwidth}
		\includegraphics[width=1\linewidth]{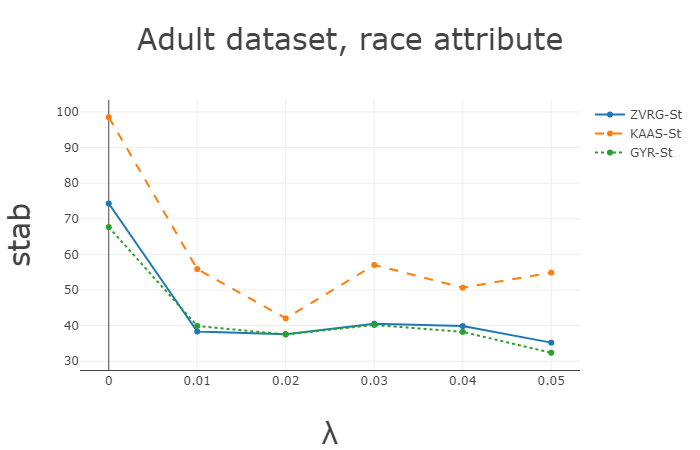}
		\caption{$\mathrm{stab}$ vs. $\lambda$ for race attribute.}
		\label{fig:race}
	\end{minipage}\qquad
	%	\end{minipage}\hspace{0.05cm}
	\begin{minipage}[b]{.45\textwidth}
		%	\centering
		\includegraphics[width=1\linewidth]{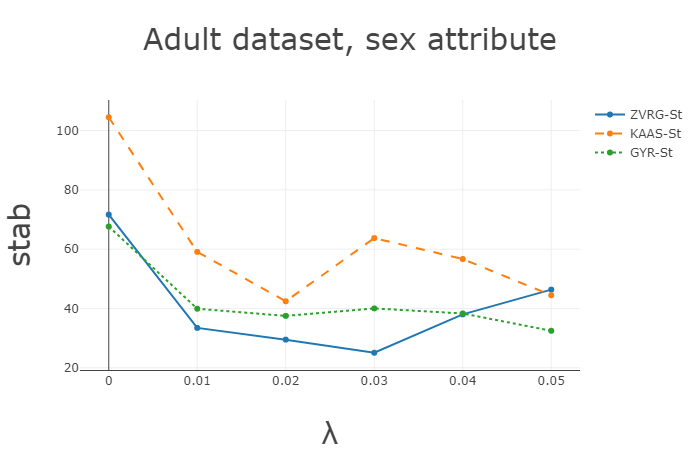}
		\caption{$\mathrm{stab}$ vs. $\lambda$ for sex attribute.}
		\label{fig:sex}
	\end{minipage}
\end{figure*}

\vspace{-3mm}

\paragraph{Dataset.} Our simulations are over an income dataset \textbf{Adult} ~\cite{Dua:2017}, that records the demographics of 45222 individuals,
along with a binary label indicating whether the income of an individual is greater than 50k USD or not. 
{We use the pre-processed dataset as in~\cite{friedler2018comparative}.} 
We take race and sex to be the sensitive attributes, that are binary in the dataset.

\vspace{-3mm}

\paragraph{Stability metrics.} 
The following stability metric that measures the prediction difference between classifiers learnt from two random training sets.
Given an integer $N$, a testing set $T$ and algorithm $\calA$, we define $\mathrm{stab}_T(\calA)$ as $\\$
%
%	\begin{eqnarray*}
%	{\scriptstyle	\begin{split}
%		& \\
$|T|\cdot \Pr_{S,S'\sim \Im^N,X\sim T,\calA} \left[
\I\left[\calA_S(X)\geq 0\right]\neq \I\left[\calA_{S'}(X)\geq 0\right]  \right].$
%	\end{split}}
%	\end{eqnarray*}
%

$\mathrm{stab}_T(\calA)$ indicates the expected number of different predictions of $\calA_S$ and $\calA_{S'}$ over the testing set $T$.
Note that this metric is considered in~\cite{friedler2018comparative}, but is slightly different from prediction stability since $S$ and $S'$ may differ by more than one training sample.
%
% Since \textbf{Adult} is of large size, changing only one sample barely affects the learned classifier.
%
% Also in practice, there may exist noises on multiple training samples.
% 
We investigate $\mathrm{stab}_T(\calA)$ instead of prediction stability so that we can distinguish the performances of prediction difference under different regularization parameters.
% Note that $\left| \I\left[\calA_S(x)\geq 0\right]- \I\left[\calA_{S'}(x)\geq 0\right] \right|=1$ if and only if the predictions of $\calA_S$ and $\calA_{S'}$ over sample $s$ are different.
% 
Since $\Im$ is unknown, we generate $n$ training sets $S_1,\ldots,S_{n}$ and use the following metric to estimate $\mathrm{stab}_T(\calA)$:
\begin{eqnarray}
\label{eq:stability}
\begin{split}
\mathrm{stab}_{T,n}(\calA):=& \frac{1}{n(n-1)} \sum_{i,j\in [n]: i\neq j}  \sum_{s=(x,z,y)\in T}\\
&\left| \I\left[\calA_{S_i}(x)\geq 0\right]- \I\left[\calA_{S_j}(x)\geq 0\right] \right|.
\end{split}
\end{eqnarray}

Note that we have $\Exp_{S_1,\ldots,S_n}\left[\mathrm{stab}_{T,n}(\calA)\right]=\mathrm{stab}_{T}(\calA)$.
%
% Thus, Algorithm $\calA$ is stable if $\mathrm{stab}_{T,n}(\calA)$ is small.
%

\vspace{-3mm}

\paragraph{Fairness metric.} Let $D$ denote the empirical distribution over the testing set.
Given a classifier $f$, we consider a fairness metric for statistical rate, which has been applied in~\cite{menon2018the,agarwal2018reductions}.
Suppose the sensitive attribute is binary, i.e., $Z\in \left\{0,1\right\}$.
\begin{eqnarray}
\label{eq:gamma}
\begin{split}
&\gamma(f):=\\
&\min\left\{\frac{\Pr_D\left[f=1\mid Z=0\right]}{\Pr_D\left[f=1\mid Z=1\right]},\frac{\Pr_D\left[f=1\mid Z=1\right]}{\Pr_D\left[f=1\mid Z=0\right]} \right\}.
\end{split}
\end{eqnarray}

\noindent
Our framework can be easily extended to other fairness metrics; see a summary in Table 1 of~\cite{celis2018classification}.

\paragraph{Implementation details.} We first generate a common testing set (20$\%$). 
Then we perform 50 repetitions, in which we uniformly sample a training set (75$\%$) from the remaining data.
For all three algorithms, we set the regularization parameter $\lambda$ to be $0, 0.01, 0.02, 0.03, 0.04, 0.05$ and compute the resulting stability metric $\mathrm{stab}$, average accuracy and average fairness.
Note that $\lambda = 0$ is equivalent to the case without stability-focused regularization term.

\subsection{Results}

Our simulations indicate that introducing a stability-focused regularization term can make the algorithm more stable by slightly sacrificing accuracy.
Table~\ref{tab:acc} summarizes the accuracy and fairness metric under different regularization parameters $\lambda$.
As $\lambda$ increases, the average accuracy slightly decreases, by at most 1.5$\%$, for all algorithms including \textbf{ZVRG-St}, \textbf{KAAS-St} and \textbf{GYR-St}.
As for the fairness metric, as $\lambda$ increases, the mean of $\gamma$ decreases for \textbf{KAAS-St} and increases for \textbf{ZVRG-St} for both race and sex attribute.
For \textbf{GYF-St}, the performance of fairness metric depends on the sensitive attribute: as $\lambda$ increases, the mean of $\gamma$ decreases for the sex attribute and increases for the race attribute.
Note that the fairness metric $\gamma$ of \textbf{KAAS-St} and \textbf{GYF-St} is usually smaller than that of \textbf{ZVRG-St} with the same $\lambda$.
The results indicate that \textbf{ZVRG-St} achieves the better fairness than, and comparable accuracy to, other algorithms.
Another observation is that the standard deviation of $\gamma$ decreases by introducing the regularization term. 
Specifically, considering the sex attribute, the standard deviation of $\gamma$ is 4.1$\%$ when $\lambda=0$ and decreases to about 1$\%$ by introducing a stability-focused regularization term. 
This observation implies that our extended framework improves the stability.

Figures~\ref{fig:race} and~\ref{fig:sex} summarize the stability metrics $\mathrm{stab}$ under different regularization parameters $\lambda$.
By introducing stability-focused regularization term, $\mathrm{stab}$ indeed decreases for both race and sex attributes.
Observe that $\mathrm{stab}$ can decrease by a half by introducing the regularization term for all three algorithms. 
Note that $\mathrm{stab}$ of \textbf{KAAS-St} is always larger than that of \textbf{ZVRG-St} and \textbf{GYF-St} with the same $\lambda$.
The stability of \textbf{ZVRG-St} and \textbf{GYF-St} is comparable.
Interestingly, $\mathrm{stab}$ does not monotonically decrease as $\lambda$ increases due to the fairness requirements.
The reason might be as follows: as $\lambda$ increases, the model parameters of the learned classifiers should decrease monotonically. 
However, it is possible that a classifier with smaller model parameters is more sensitive to random training sets. 
In this case, if the effect of $\lambda$ to $\mathrm{stab}$ is less when compared to the effect of model parameters, $\mathrm{stab}$ might not decrease monotonically with $\lambda$. 
Hence, selecting a suitable regularization parameter $\lambda$ is valuable in practice, e.g., considering \textbf{ZVRG-St} for sex attribute, letting $\lambda=0.03$ achieves better performance of accuracy, fairness and stability than letting $\lambda = 0.05$.

		\section{Conclusion and future directions}
	\label{sec:conclusion}
	
	We propose an extended framework for fair classification algorithms that are formulated as optimization problems.
	Our framework comes with a stability guarantee and we also provide an analysis of the resulting accuracy.
	The analysis can be used to inform the selection of the regularization parameter.
	The empirical results show that our framework indeed improves stability by slightly sacrificing the accuracy.
	%
	
	%Our work leaves several interesting future directions. 
	%
	There exist other fair classification algorithms that are not formulated as optimization problems, e.g., shifting the decision boundary of a baseline classifier~\cite{fish2016confidence,hardt2016equality} or pre-processing the training data~\cite{feldman2015certifying,krasanakis2018adaptive}.
	It is interesting to investigate and improve the stability guarantee of those algorithms.
	Another potential direction is to combine stability and fairness for other automated decision-making tasks, e.g., ranking \cite{celis2018ranking,yang2017measuring}, summarization \cite{celis2018fair}, personalization \cite{celis2018an,celis2019algorithmic}, multiwinner voting \cite{celis2018multiwinner}, and online advertising \cite{celis2019online}.

	\bibliography{references}
	\bibliographystyle{plain}
	
	\appendix
	
	\section{Proof of Theorem~\ref{thm:stable2}}
	\label{sec:proof2}
	
	\begin{proof}
		By Inequality~\eqref{ineq:4} in the proof of Lemma~\ref{lm:bound}, we have
		\begin{eqnarray}
		\label{ineq:2_1}
		\begin{split}
		& 2\lambda \|v-v^i\|_2^2
		\leq \frac{1}{N} \left(L(g^{i},s_i)-L(g,s_i) + L(g, s'_i)- L(g^i, s'_i) \right).
		\end{split}
		\end{eqnarray}
		Moreover, we have for any $f=\alpha\cdot \phi(\cdot),f'=\alpha'\cdot \phi(\cdot)\in \calF$ and $s\in \calD$,
		\begin{eqnarray}
		\label{ineq:2_2}
		\begin{split}
		L(f,s)-L(f',s) \leq & \langle \nabla_\alpha L(f,s) , \alpha-\alpha' \rangle 
		&\text{(Convexity of $L(\cdot,s)$)} \\
		\leq & \|\nabla_\alpha L(\alpha,s)\|_2 \cdot \|\alpha-\alpha'\|_2 & \\
		\leq & G\|\alpha-\alpha'\|_2 
		& \text{(Defn. of $G$)}.
		\end{split}
		\end{eqnarray}
		Combining with Inequalities~\eqref{ineq:2_1} and~\eqref{ineq:2}, we have
		\begin{eqnarray*}
			\begin{split}
				\|v-v^i\|_2^2 
				\leq & \frac{1}{2\lambda N} \left(L(g^{i},s_i)-L(g,s_i) + L(g, s'_i)- L(g^i, s'_i) \right) 
				&\text{(Ineq.~\eqref{ineq:2_1})} \\
				\leq & \frac{1}{2\lambda N} \left(G\|v-v^i\|_2 +G\|v-v^i\|_2\right) 
				&\text{(Ineq.~\eqref{ineq:2_2})} \\
				=&\frac{G}{\lambda N} \|v-v^i\|_2.&
			\end{split}
		\end{eqnarray*}
		It implies that $\|v-v^i\|_2\leq \frac{G}{\lambda N}$. 
		Combining with Inequality~\eqref{ineq:2}, we have for any $s\in \calD$,
		\[
		L(g,s)-L(g^i,s) \leq G\|v-v^i\|_2\leq \frac{G^2}{\lambda N}.
		\]
		This completes the proof for the stability guarantee.
		For the sacrifice in the empirical risk, the argument is the same as that of Theorem~\ref{thm:stable}.
	\end{proof}

	\section{Details of Remark~\ref{remark:assumption}}
	\label{sec:dis}
	
	\begin{itemize}
		\item Prediction error: $f(x)\in \left\{-1,1\right\}$ for any pair $(f,x)$ and $L(f(x),y) = \I\left[f(x)\neq y\right] $,\footnote{Here, $\I\left[\cdot\right]$ is the indicator function.} then we have that 
		\begin{eqnarray*}
			& & \left|L(f(x),y)-L(f(x'),y)\right| \\
			&=& \left|\I\left[f(x)\neq y\right]-\I\left[f(x')\neq y\right] \right| \\
			&=& \I\left[f(x)\neq f(x')\right]= \frac{1}{2} \left|f(x)-f(x') \right|,
		\end{eqnarray*}
		which is $\frac{1}{2}$-admissible.
		\item Soft margin SVM: $L(f,s) = (1-yf(x))_+$,\footnote{$(a)_+ = a$ if $a\geq 0$ and otherwise $(a)_+ = 0$.} then we have that
		\begin{eqnarray*}
			& &\left|L(f(x),y)-L(f(x'),y) \right| \\
			&=& \left|(1-yf(x))_+ -(1-yf(x'))_+ \right| \\
			&\leq& \left| yf(x)-yf(x') \right| \\
			&=& \left|f(x)-f(x')\right|,
		\end{eqnarray*}
		which is 1-admissible. 
		\item Least Squares regression: $L(f,s) = (f(x)-y)^2$. Suppose $f(x)\in [-1,1]$ for any $x\in \calX$, then we have that 
		\begin{eqnarray*}
			& & \left|L(f(x),y)-L(f(x'),y) \right| \\
			&=&  \left| (f(x)-y)^2 - (f(x')-y)^2 \right| \\
			&=& \left|(f(x)+f(x')-2y)(f(x)-f(x'))\right|\\
			&\leq& 4 \left|f(x)-f(x')\right|,
		\end{eqnarray*}
		which is 4-admissible.
		\item Logistic regression: $L(f,s) = \ln (1+e^{-y f(x)})$. 
		Note that we have for any $x\in \calX$ and $y\in \left\{-1,1\right\}$,
		\begin{eqnarray*}
			& &\left|\nabla_{f(x)} \ln (1+e^{-y f(x)}) \right| \\
			&=& \left|\frac{-y e^{-y f(x)}}{1+e^{-y f(x)}} \right| 
			= \left|\frac{e^{-y f(x)}}{1+e^{-y f(x)}} \right|
			\leq 1.
		\end{eqnarray*}
		Hence, the loss function $L(f,s) = \ln (1+e^{-y f(x)})$ is 1-admissible.
	\end{itemize}

\end{document}